\documentclass{article}



\usepackage[preprint]{neurips_2019}



\usepackage[utf8]{inputenc} 
\usepackage[T1]{fontenc}    
\usepackage{hyperref}       
\usepackage{url}            
\usepackage{booktabs}       
\usepackage{amsfonts}       
\usepackage{nicefrac}       
\usepackage{microtype}      

\usepackage{lipsum}
\usepackage{graphicx}
\usepackage{amsmath}
\usepackage{subfig}
\usepackage{amsthm}

\DeclareMathOperator*{\argmax}{arg\,max}
\DeclareMathOperator*{\argmin}{arg\,min}
\DeclareMathOperator*{\softmax}{softmax}

\DeclareMathOperator*{\logits}{logits}

\bibliographystyle{unsrtnat}
\setcitestyle{numbers,comma,square}

\theoremstyle{plain}
\newtheorem{thm}{Theorem}
\newtheorem{lemma}[thm]{Lemma}

\title{Are Perceptually-Aligned Gradients a 
\\ General Property of Robust Classifiers?}

 \author{
  Simran Kaur \\
  Carnegie Mellon University \\
  \texttt{skaur@cmu.edu} \\
  \And
    Jeremy Cohen \\
    Carnegie Mellon University \\
  \texttt{jeremycohen@cmu.edu} \\
  \And
    Zachary C. Lipton \\
  Carnegie Mellon University \\
  \texttt{zlipton@cmu.edu} \\
}

\begin{document}
\maketitle

\begin{abstract}

For a standard convolutional neural network, optimizing over the input pixels to maximize the score of some target class
will generally produce a grainy-looking version of the original image. 
However, Santurkar et al. (2019) demonstrated that for adversarially-trained neural networks,
this optimization produces images that uncannily resemble the target class.
In this paper, we show that these \emph{perceptually-aligned gradients} also occur 
under randomized smoothing, an alternative means of constructing adversarially-robust classifiers.
Our finding supports the hypothesis that perceptually-aligned gradients may be a general property of robust classifiers.
We hope that our results will inspire research aimed at explaining this link between perceptually-aligned gradients and adversarial robustness.

\end{abstract}

\section{Introduction}
\label{sec:intro}


Classifiers are called \emph{adversarially robust} 
if they achieve high accuracy even on adversarially-perturbed inputs \cite{szegedy2014intriguing, biggio2013evasion}.
Two effective techniques for constructing robust classifiers are adversarial training and randomized smoothing.
In adversarial training, a neural network is optimized
via a min-max objective to achieve high accuracy on 
adversarially-perturbed training examples \cite{szegedy2014intriguing, kurakin2017adversarial, madry2017towards}.
In randomized smoothing, a neural network is smoothed by convolution with Gaussian noise \cite{lecuyer2018certified, li2018second, cohen2019certified, salman2019provably}.
Recently, \cite{tsipras2018robustness, santurkar2019image, engstrom2019learning} demonstrated 
that adversarially-trained networks exhibit \emph{perceptually-aligned gradients}: iteratively updating an image by gradient ascent so as to maximize the score assigned to a target class will render an image that perceptually resembles 
the target class.

In this paper, we show that smoothed neural networks also exhibit perceptually-aligned gradients.
This finding supports the conjecture in \cite{tsipras2018robustness, santurkar2019image, engstrom2019learning} that perceptually-aligned gradients 
may be a general property of robust classifiers,
and not only a curious consequence of adversarial training.
Since the root cause behind the apparent relationship between adversarial robustness and perceptual alignment remains unclear, we hope that our findings will spur foundational research aimed at explaining this connection.

\paragraph{Perceptually-aligned gradients}
Let $f: \mathbb{R}^d \to \mathbb{R}^k$ be a neural network image classifier 
that maps from images in $\mathbb{R}^d$ to scores for $k$ classes.
Naively, one might hope that 
by starting with any image $\mathbf{x}_0 \in \mathbb{R}^d$
and taking gradient steps so as to maximize the score of a target class $t \in [k]$,
we would produce an altered image 
that better resembled (perceptually) the targeted class. 
However, as shown in Figure \ref{fig:three-models-compared}, 
when $f$ is a vanilla-trained neural network, this is not the case; 
iteratively following the gradient of class $t$'s score 
appears perceptually as a \emph{noising} of the image. 
In the nascent literature on the explainability of deep learning, 
this problem has been addressed by adding explicit regularizers 
to the optimization problem \cite{olah2017feature, nguyen2014deep, mahendran2015understanding, oygard2015visualizing}.
However, \cite{santurkar2019image} showed that for adversarially-trained neural networks, 
these explicit regularizers aren't needed --- merely following the gradient of a target class $t$ 
will render images that visually resemble class $t$. 

\begin{figure}[t]
  \centering
  \includegraphics[scale=0.25]{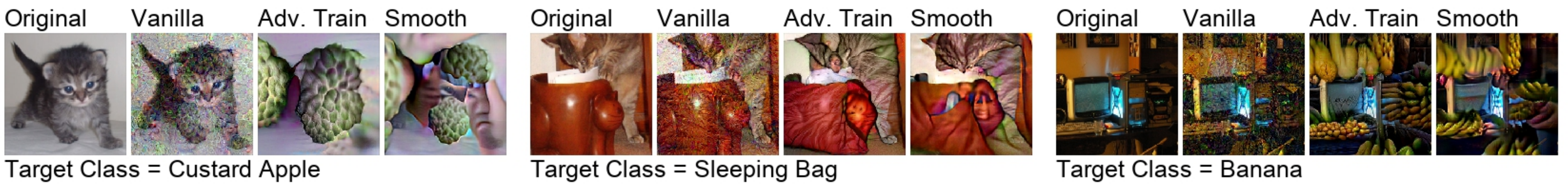}
  \caption{Large-$\epsilon$ targeted adversarial examples for a vanilla-trained network, 
  an adversarially trained network, and a smoothed network.  
  Adversarial examples for both robust classifiers visually resemble the targeted class, while adversarial examples for the vanilla classifier do not.
  All of these adversarial examples have perturbation size $\epsilon = 40$ (on images with pixels scaled to $[0, 1]$).}
  \label{fig:three-models-compared}
\end{figure}




\paragraph{Randomized smoothing}
Across many studies, adversarially-trained neural networks have proven empirically successful at resisting adversarial attacks
within the threat model in which they were trained \cite{athalye2018obfuscated, brendel2019accurate}.
Unfortunately, when the networks are large and expressive,
no known algorithms are able to provably \emph{certify} 
this robustness \cite{salman2019convex}, 
leaving open the possibility that they will be vulnerable to better adversarial 
attacks developed in the future.

For this reason, a distinct approach to robustness called \emph{randomized smoothing} has recently gained traction in the literature \cite{lecuyer2018certified, li2018second, cohen2019certified, salman2019provably}.
In the $\ell_2$-robust version of randomized smoothing, the robust classifier $\hat{f}_\sigma: \mathbb{R}^d \to \mathbb{R}^k$ 
is a \emph{smoothed neural network} of the form:
\begin{align}
        \hat{f}_\sigma(\mathbf{x}) = \mathbb{E}_{\boldsymbol{\varepsilon} \sim \mathcal{N}(0, \sigma^2 I)}[ f(\mathbf{x} + \boldsymbol{\varepsilon})]
        \label{eq:smoothing}
\end{align}
where $f: \mathbb{R}^d \to \mathbb{R}^k$ is a neural network (ending in a softmax) called the \emph{base network}.
In other words, $\hat{f}_\sigma(\mathbf{x})$, the smoothed network's predicted scores at $\mathbf{x}$,
is the weighted average of $f$
within the neighborhood around $\mathbf{x}$, 
where points are weighted according to an isotropic Gaussian
centered at $\mathbf{x}$ with variance $\sigma^2$.
A disadvantage of randomized smoothing is that 
the smoothed network $\hat{f}_\sigma$ cannot be evaluated exactly, due to the expectation in \eqref{eq:smoothing},
and instead must approximated via Monte Carlo sampling.
However, by computing $\hat{f}_\sigma(\mathbf{x})$ one can obtain a
guarantee that $\hat{f}_\sigma$'s prediction is constant
within an $\ell_2$ ball around $\mathbf{x}$; in contrast, it is not currently possible to obtain such certificates using neural network classifiers.
See Appendix \ref{sec:randomized-smoothing} for more background on randomized smoothing.

How to best train the base network $f$ to maximize the certified accuracy of the smoothed network $\hat{f}_\sigma$
remains an open question in the literature.
In \cite{lecuyer2018certified, cohen2019certified}, 
the base network $f$ was trained with Gaussian data augmentation.
However, \cite{carmon2019unlabeled, li2018second} showed 
that training $f$ instead using stability training \cite{zheng2016improving} 
resulted in substantially higher certified accuracy, 
and \cite{salman2019provably} showed that training $f$ by adversarially training $\hat{f}_\sigma$ also outperformed Gaussian data augmentation.
Our main experiments use a base network trained with Gaussian data augmentation.
In Appendix \ref{sec:generating-details} we compare against the network from \cite{salman2019provably}.

\section{Experiments}
\label{sec:method}
\begin{figure}[b]
  \centering
  \includegraphics[scale=0.4]{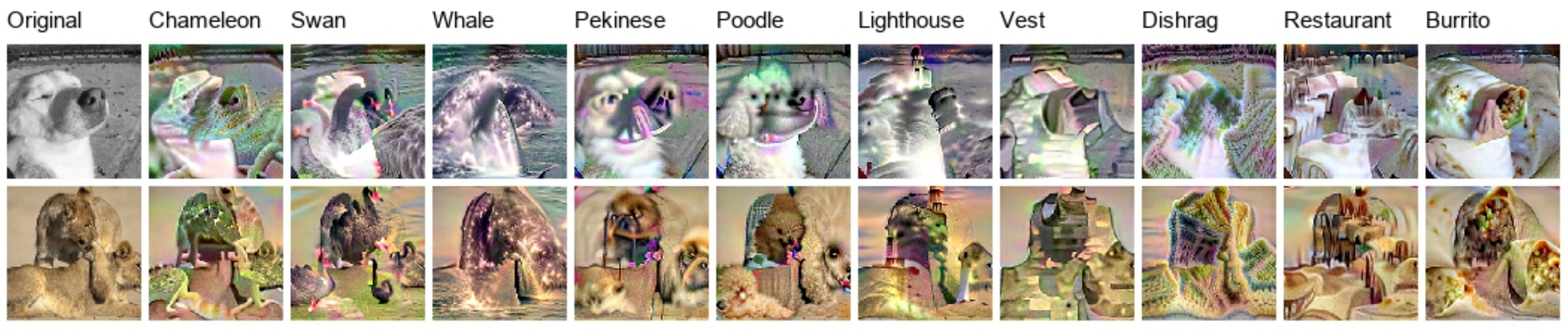}
  \caption{Large-$\epsilon$ adversarial examples for a smoothed neural network. 
  Each row is a (random) starting image, each column is a (random) target class.
  See Figures \ref{fig:all-large-ep-adversarial-ex-1}-\ref{fig:all-large-ep-adversarial-ex-2} in Appendix \ref{sec:additional-images} for more.
  }
  \label{fig:large-epsilon}
\end{figure}

In this paper, we show that smoothed neural networks exhibit perceptually-aligned gradients.
By design, our experiments mirror those conducted in \cite{santurkar2019image}.
To begin, we synthesize large-$\epsilon$ targeted adversarial examples 
for a smoothed  ($\sigma=0.5)$ ResNet-50 trained on ImageNet \cite{he2016deep, imagenetcvpr09}.
Given some source image $\mathbf{x}_0$, we used projected gradient descent (PGD)
to find an image $\mathbf{x}^*$ within $\ell_2$ distance $\epsilon$ of $\mathbf{x}_0$ 
that the smoothed network $\hat{f}_\sigma$ 
classifies confidently as target class $t$.
Specifically, decomposing $f$ as $f(\mathbf{x}) = \softmax(\logits(\mathbf{x}))$, we solve the problem:
\begin{align}
    \mathbf{x}^* = \argmax_{\mathbf{x}: \; \|\mathbf{x} - \mathbf{x}_0 \| \le \epsilon} \mathbb{E}_{\boldsymbol{\varepsilon} \sim \mathcal{N}(0, \sigma^2 I)}[\logits(\mathbf{x} + \boldsymbol{\varepsilon})_t]
    \label{eq:obj}.
\end{align}
We find that optimizing \eqref{eq:obj} yields visually 
more compelling results than minimizing the cross-entropy loss of $\hat{f}_\sigma$.
See Appendix \ref{sec:generating-details} for a comparison between \eqref{eq:obj} and the cross-entropy approach.

The gradient of the objective \eqref{eq:obj} cannot be computed exactly, 
due to the expectation over $\boldsymbol{\varepsilon}$, 
so we instead used an unbiased estimator obtained 
by sampling $N=20$ noise vectors
$\boldsymbol{\varepsilon}_1, \hdots, \boldsymbol{\varepsilon}_N \sim \mathcal{N}(0, \sigma^2I)$ 
and computing the average gradient 
$\frac{1}{N} \sum_{i=1}^N \nabla_{\mathbf{x}} \logits(\mathbf{x} + \boldsymbol{\varepsilon}_i)_t$.

Figure \ref{fig:three-models-compared} depicts large-$\epsilon$ targeted adversarial examples 
for a vanilla-trained neural network, an adversarially trained network \cite{madry2017towards}, and a smoothed network.
Observe that the adversarial examples for the vanilla network 
do not take on coherent features of the target class,
while the adversarial examples for both robust networks do.
Figure \ref{fig:large-epsilon} shows large-$\epsilon$ targeted adversarial examples 
synthesized for the smoothed network for a variety of different target classes.

Next, as in \cite{santurkar2019image}, we use the smoothed network 
to class-conditionally synthesize images.
To generate an image from class $t$, we sample a seed image $\mathbf{x}_0$ from a multivariate Gaussian fit to images from class $t$,
and then we iteratively take gradient steps 
to maximize the score of class $t$ using objective \eqref{eq:obj}.
Figure \ref{fig:generation6} shows two images synthesized in this way from each of seven ImageNet classes.
The synthesized images appear visually similar 
to instances of the target class, though they often lack global coherence --- the 
synthesized solar dish includes multiple overlapping solar dishes.

\begin{figure}
  \centering
  \includegraphics[scale=0.4]{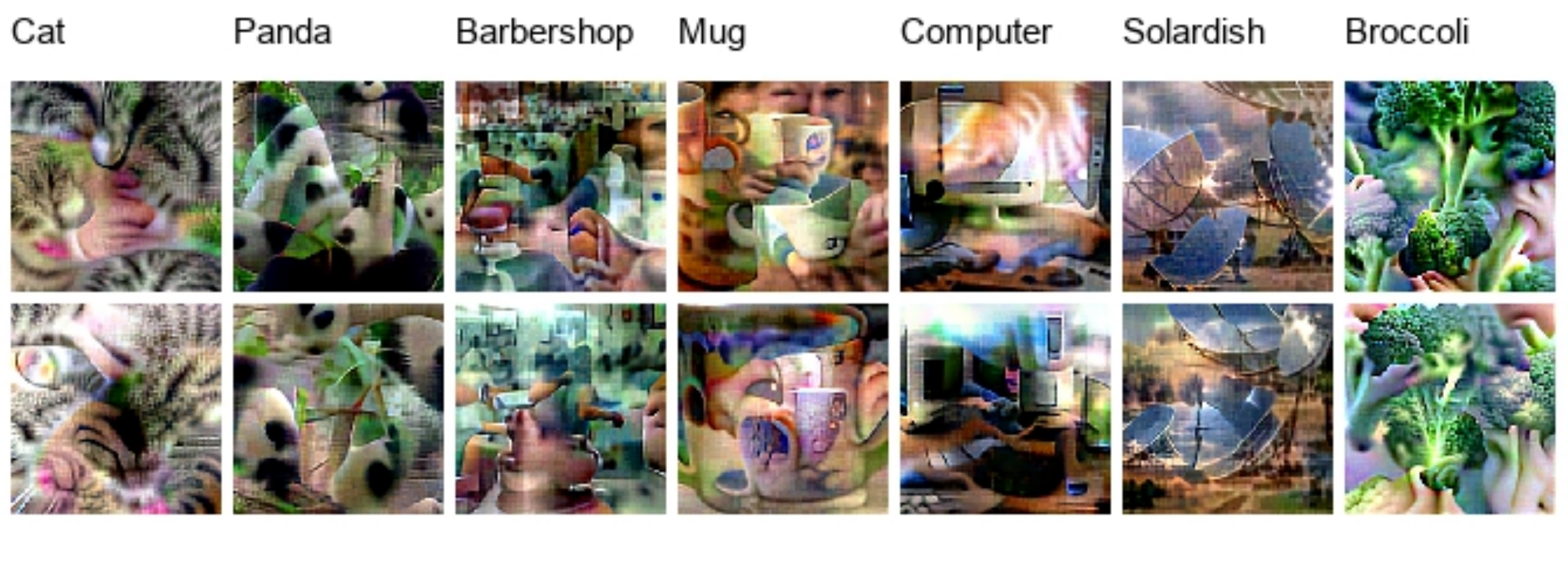}
  \caption{Class-conditional image synthesis using a smoothed NN.  
  To synthesize an image from class $t$, we sampled a seed image from a multivariate Gaussian fit to images from class $t$, and then performed PGD to maximize the score of class $t$.  Figures \ref{fig:all-gaussian-synthesized-1}-\ref{fig:all-gaussian-synthesized-2} in Appendix \ref{sec:additional-images} have more examples.} 
  \label{fig:generation6}
\end{figure}

\paragraph{Noise Level $\sigma$}
Smoothed neural networks have a hyperparameter $\sigma$ which controls a robustness/accuracy tradeoff: 
when $\sigma$ is high, the smoothed network is more robust, 
but less accurate \cite{lecuyer2018certified, cohen2019certified}.
We investigated the effect of $\sigma$ on the perceptual quality of generated images.
Figure \ref{fig:noise-level} shows large-$\epsilon$ adversarial examples crafted for smoothed networks 
with $\sigma$ varying in $\{0.25, 0.50, 1.00\}$.
Observe that when $\sigma$ is large, PGD tends to paint single instance of the target class; 
when $\sigma$ is small, PGD tends to add spatially scattered features.

\begin{figure}[b!]
  \centering
  \includegraphics[scale=0.4]{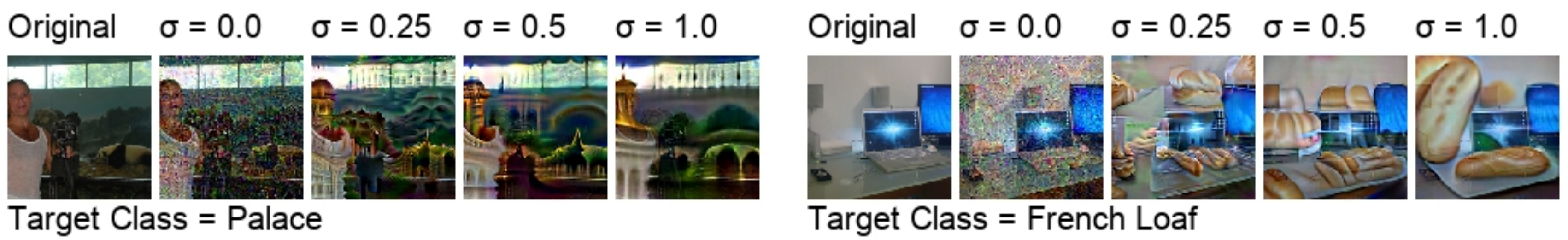}
  \caption{Large-$\epsilon$ adversarial examples crafted for smoothed neural networks 
  with different settings of the smoothing scale hyperparameter $\sigma$.
  More examples are in Figures \ref{fig:noise-level-variation-pt1}-\ref{fig:noise-level-variation-pt3} in Appendix \ref{sec:additional-images}.
}
  \label{fig:noise-level}
\end{figure}

\paragraph{Other concerns}
In Appendix \ref{sec:generating-details}, we study the effects of the following factors 
on the perceptual quality of the generated images: 
the number of Monte Carlo noise samples $N$, the loss function used for PGD, 
and whether the base network $f$ is trained using Gaussian data augmentation 
\cite{lecuyer2018certified, cohen2019certified} or \textsc{SmoothAdv} \cite{salman2019provably}.

\bibliography{main}

\appendix

\newpage
\section{Additional images}
\label{sec:additional-images}

\begin{figure}[!h]
\centering
  \includegraphics[scale=0.5]{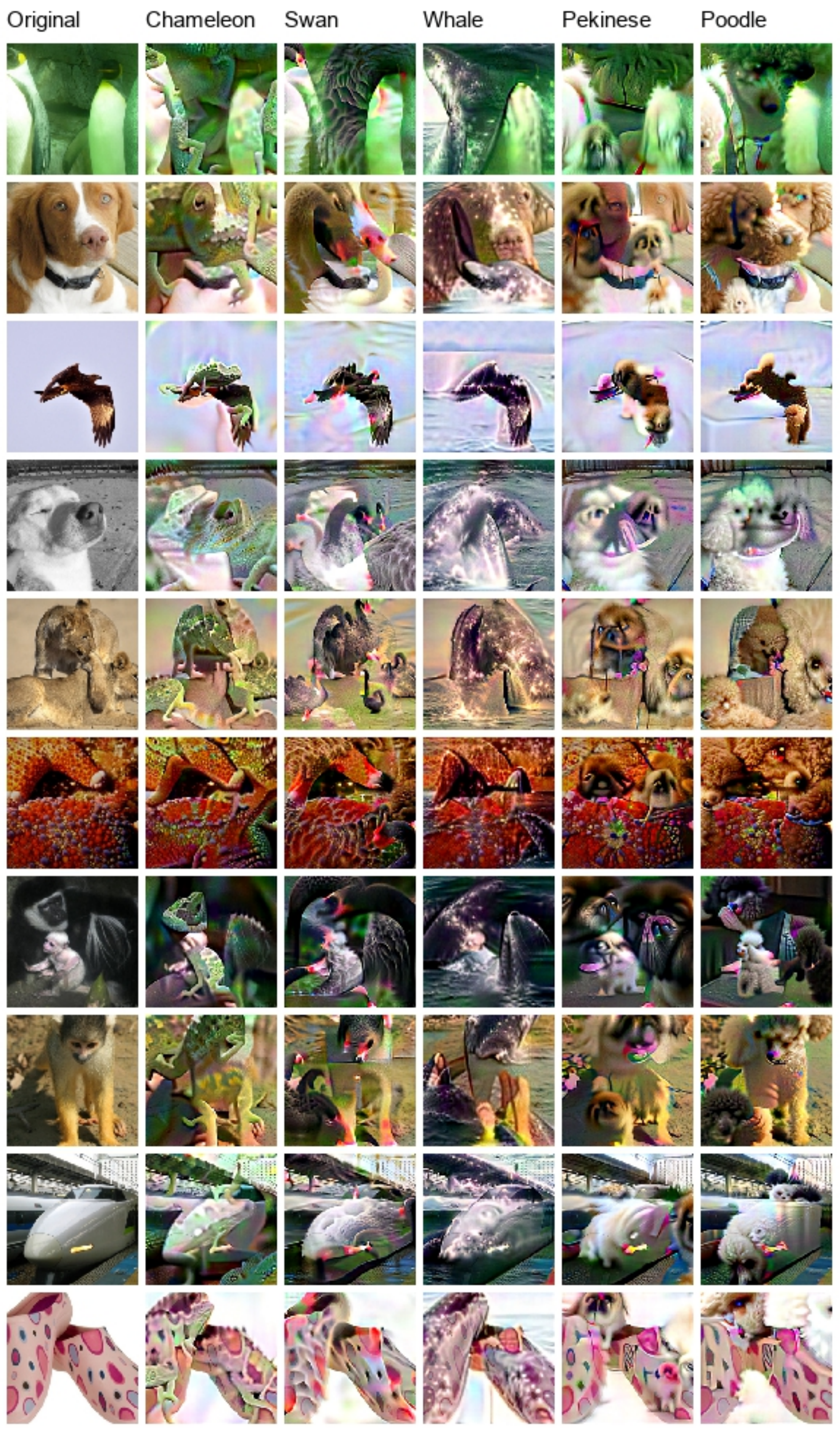}
  \caption{Large-$\epsilon$ adversarial examples for a smoothed neural network (part 1 / 2). 
  Each row is a randomly chosen starting image, each column is a randomly chosen target class.}
  \label{fig:all-large-ep-adversarial-ex-1}
\end{figure}

\newpage

\begin{figure}[!h]
\centering
  \includegraphics[scale=0.5]{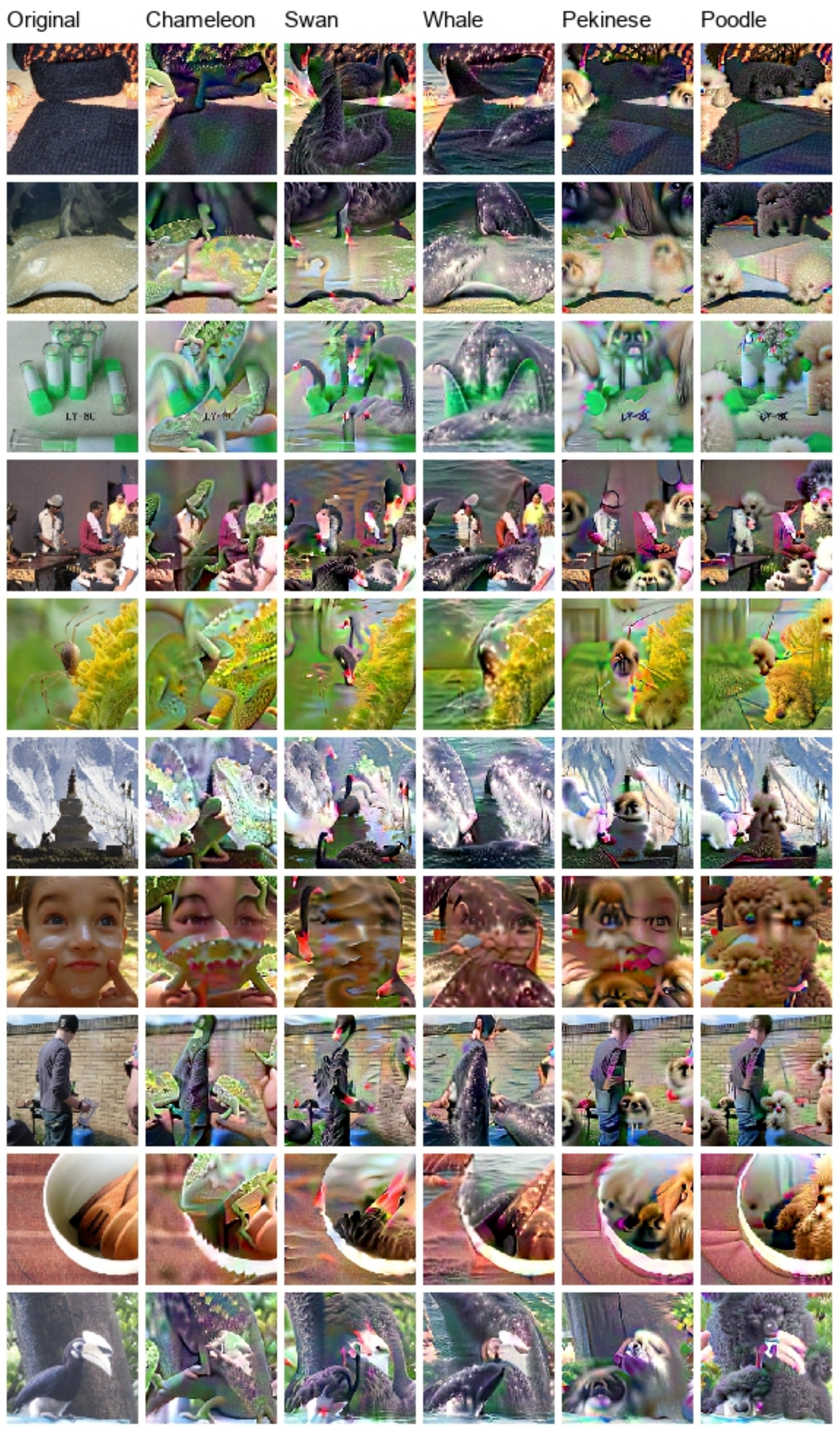}
  \caption{Large-$\epsilon$ adversarial examples for a smoothed neural network (part 2 / 2). 
  Each row is a randomly chosen starting image, each column is a randomly chosen target class.}
  \label{fig:all-large-ep-adversarial-ex-2}
\end{figure}

\newpage

\begin{figure}[!h]
\centering
  \subfloat[cat]{\includegraphics[scale=0.5]{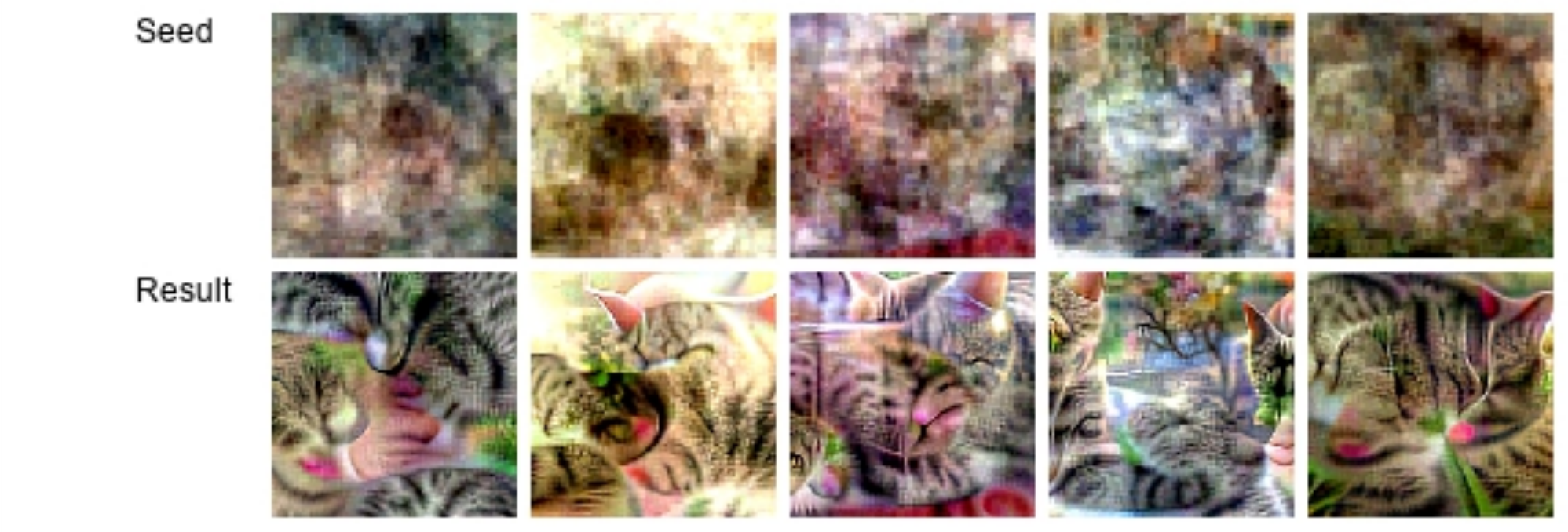}} \quad

  \subfloat[panda]{\includegraphics[scale=0.5]{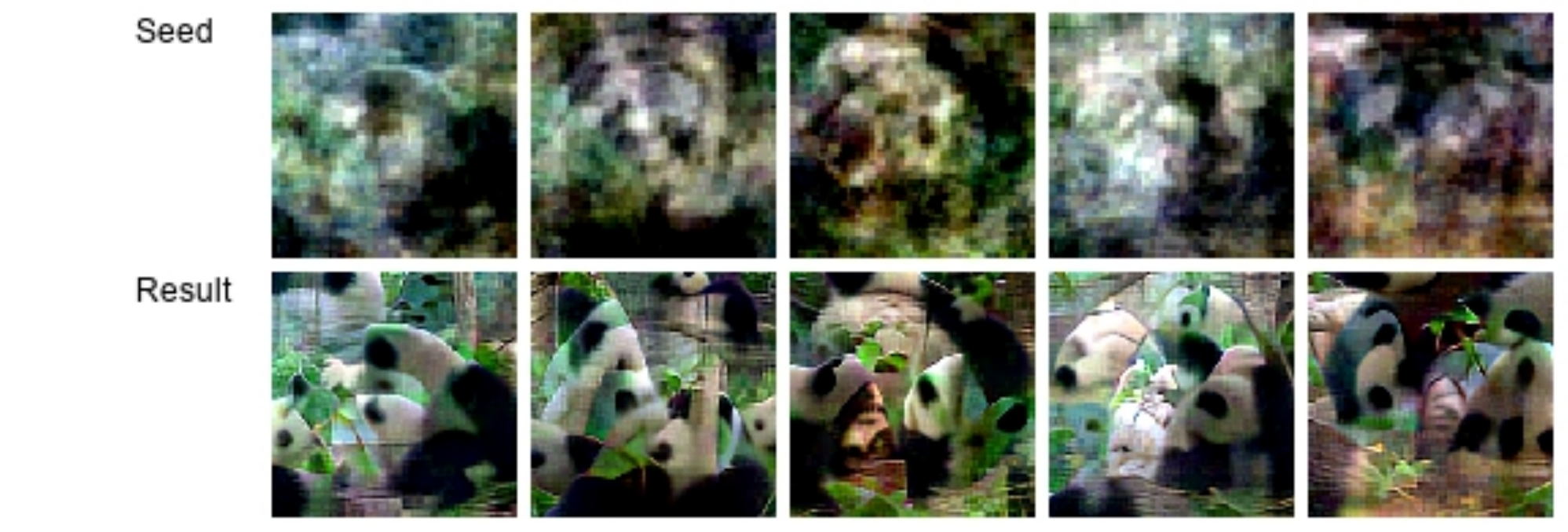}} \quad

  \subfloat[barber shop]{\includegraphics[scale=0.5]{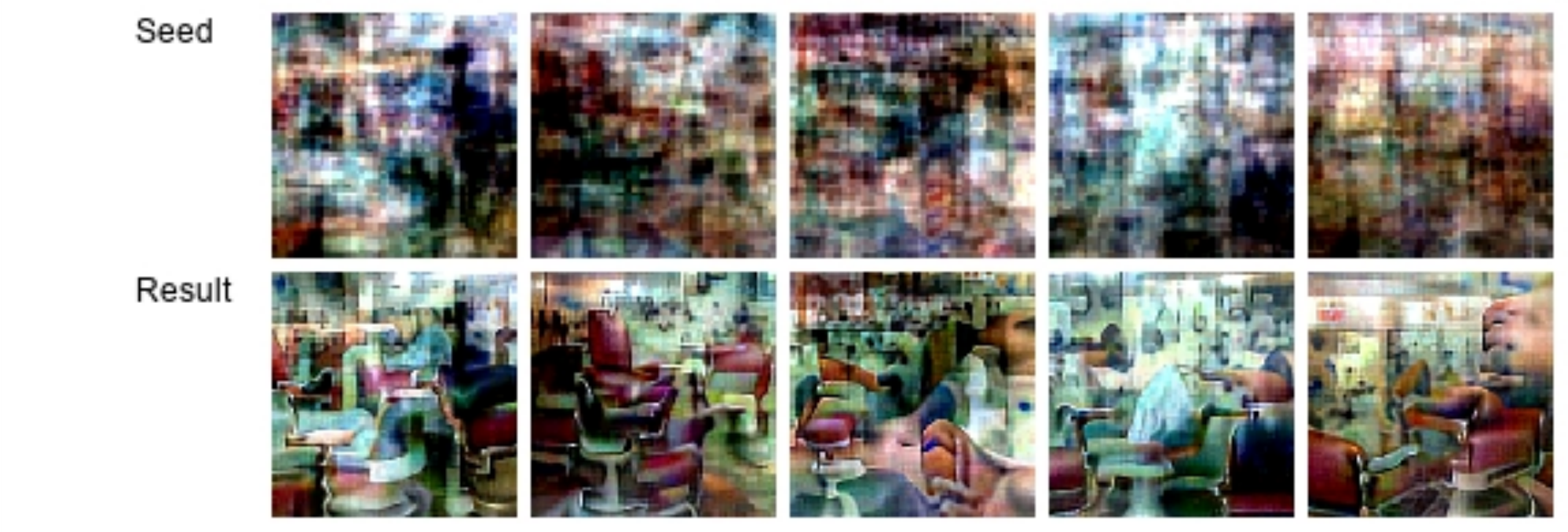}}\quad

  \subfloat[mug]{\includegraphics[scale=0.5]{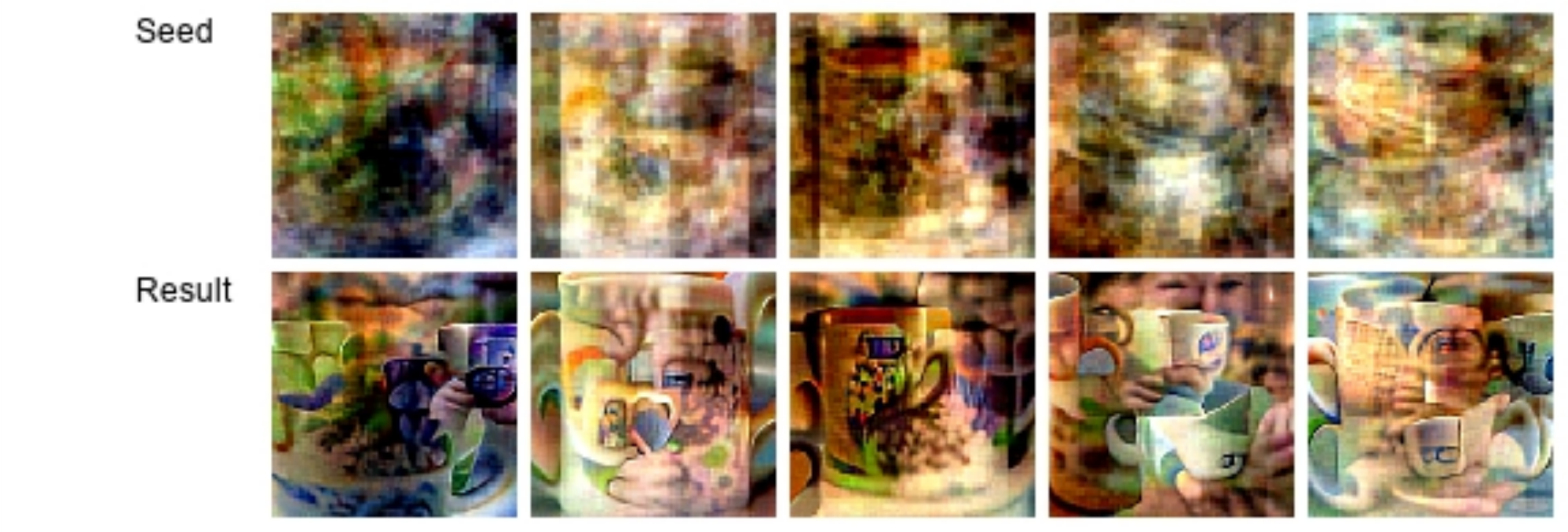}} \quad
  \caption{Class-conditional synthesized images (part 1 / 2).  
  To synthesize an image from class $t$, we sampled a seed image from a multivariate Gaussian distribution fit to class $t$, and then performed PGD to maximize the score which a smoothed neural network assigns to class $t$.  The top row shows the seed image, the bottom row shows the result of PGD.}
  \label{fig:all-gaussian-synthesized-1}
\end{figure}

\newpage

\begin{figure}[!h]
\centering
  \subfloat[computer]{\includegraphics[scale=0.5]{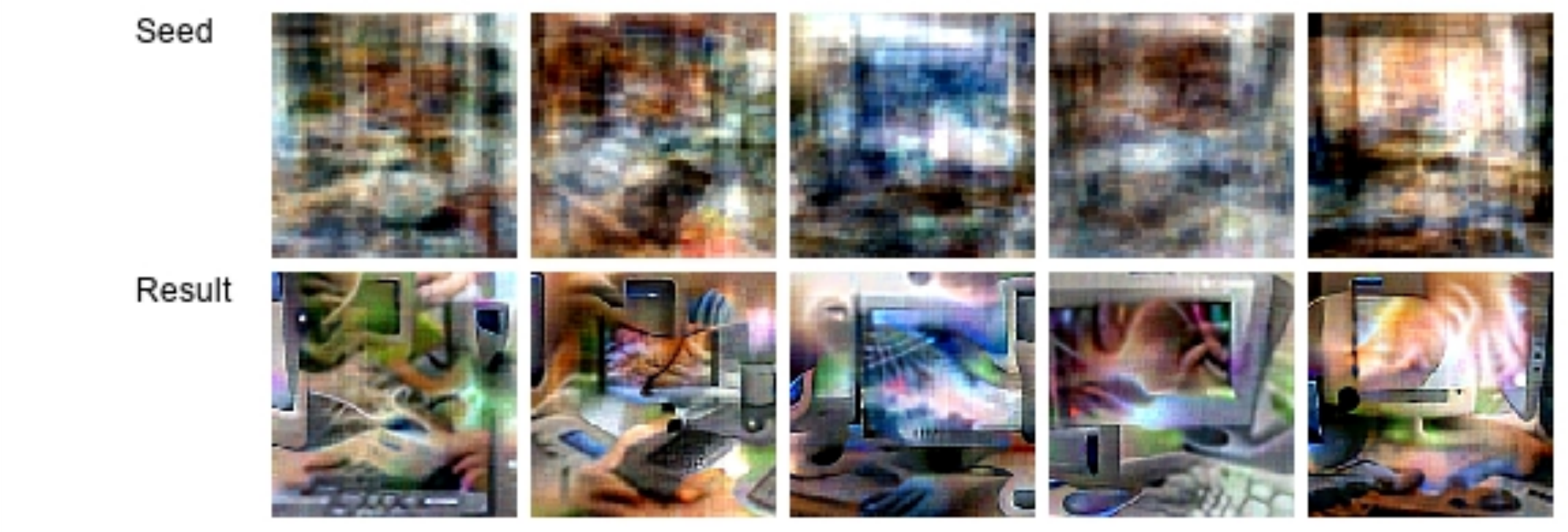}} \quad

  \subfloat[solar dish]{\includegraphics[scale=0.5]{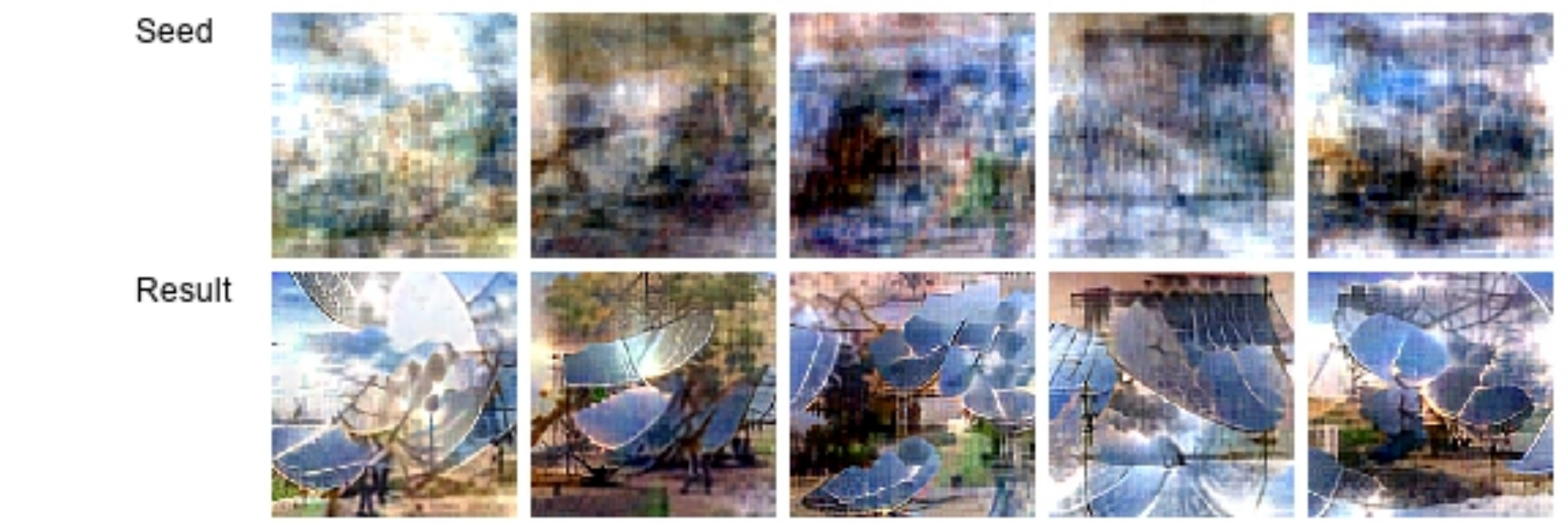}}\quad

  \subfloat[broccoli]{\includegraphics[scale=0.5]{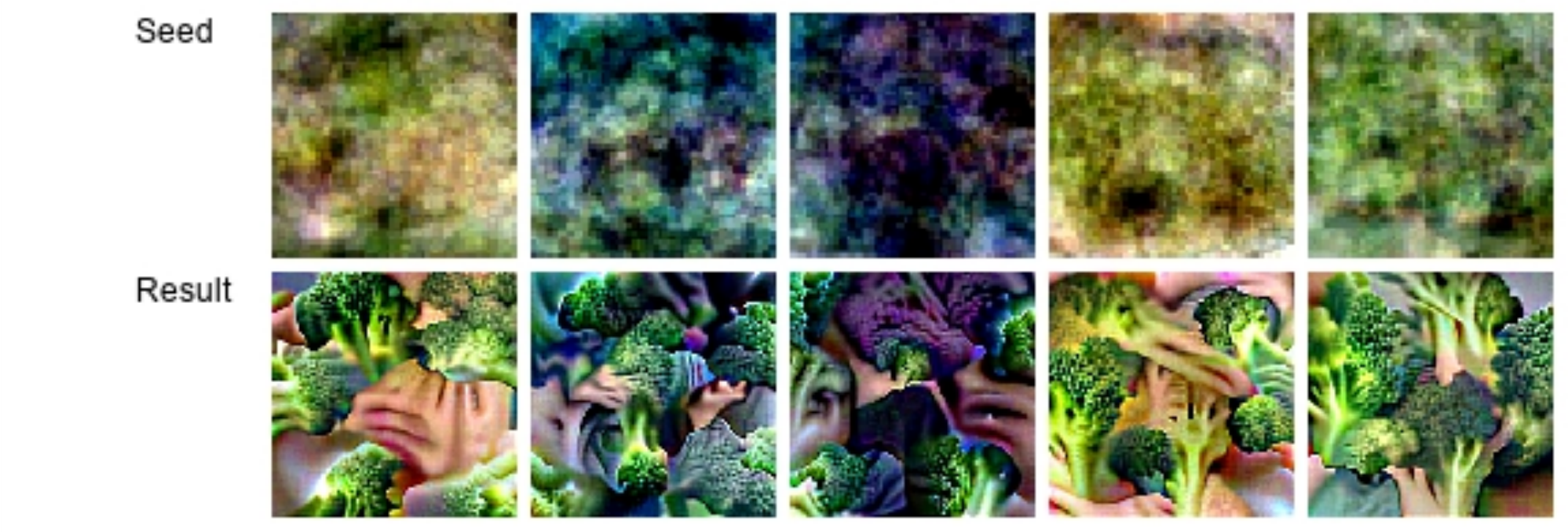}} \quad 
  \caption{Class-conditional synthesized images (part 2 / 2).  
  To synthesize an image from class $t$, we sampled a starting image from a multivariate Gaussian distribution fit to class $t$, and then performed PGD to maximize the score which a smoothed neural network assigns to class $t$. The top row shows the seed image, the bottom row shows the result of PGD.}
  \label{fig:all-gaussian-synthesized-2}
\end{figure}

\newpage

\begin{figure}[!h]
\centering
  \includegraphics[scale=0.5]{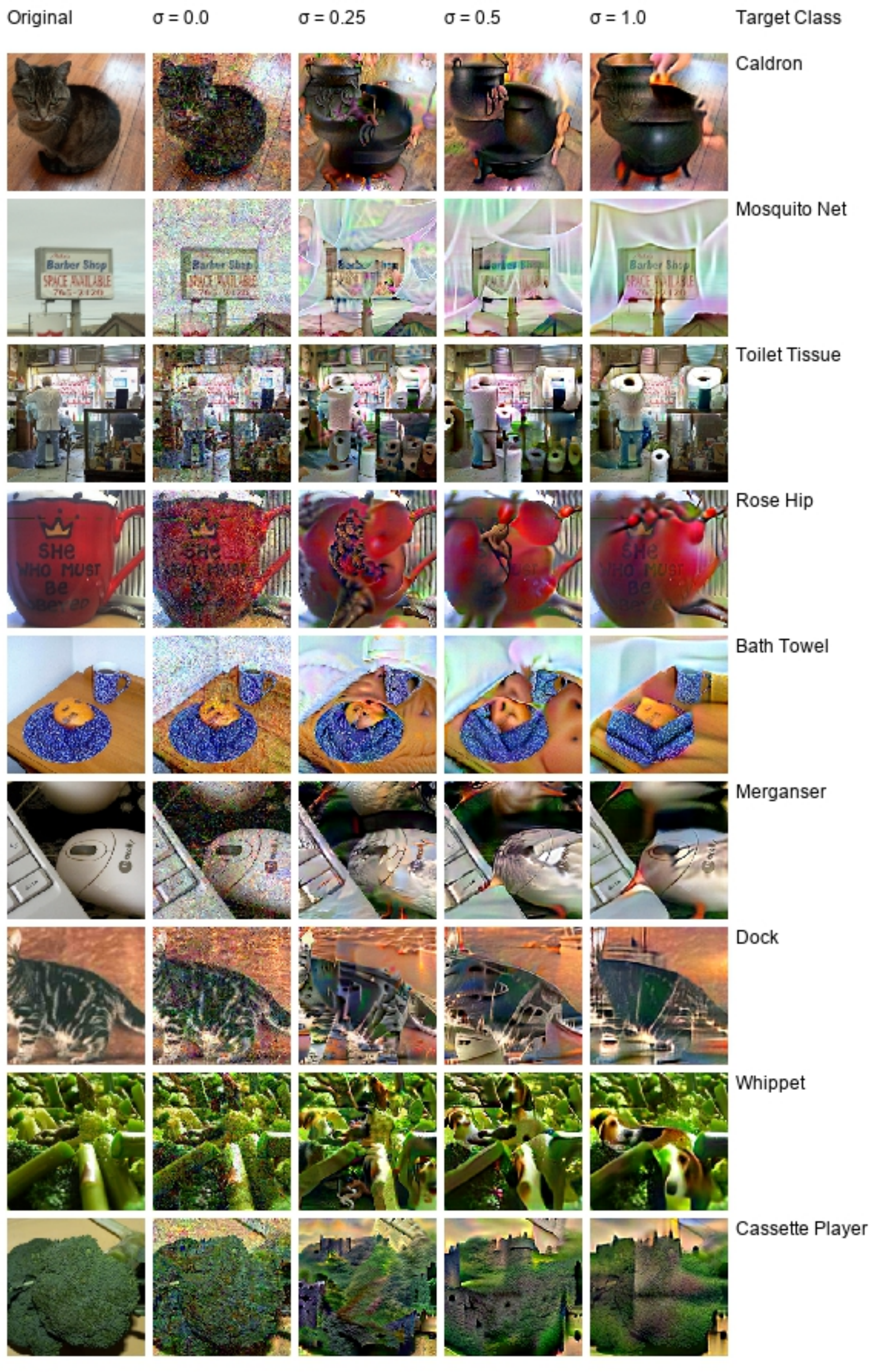}
  \caption{Large-$\epsilon$ adversarial examples crafted for smoothed neural networks with different settings of the smoothing scale hyperparameter $\sigma$ (part 1 / 3).  Images and target classes were randomly chosen.  When $\sigma$ is large, the adversary tends to paint a single, coherent instance of the target class; when $\sigma$ is small,  the adversary tends to paint scattered features of the target class.  Note that $\sigma=0.0$ corresponds to a vanilla-trained network. }
  \label{fig:noise-level-variation-pt1}
\end{figure}

\newpage

\begin{figure}[!h]
\centering
  \includegraphics[scale=0.5]{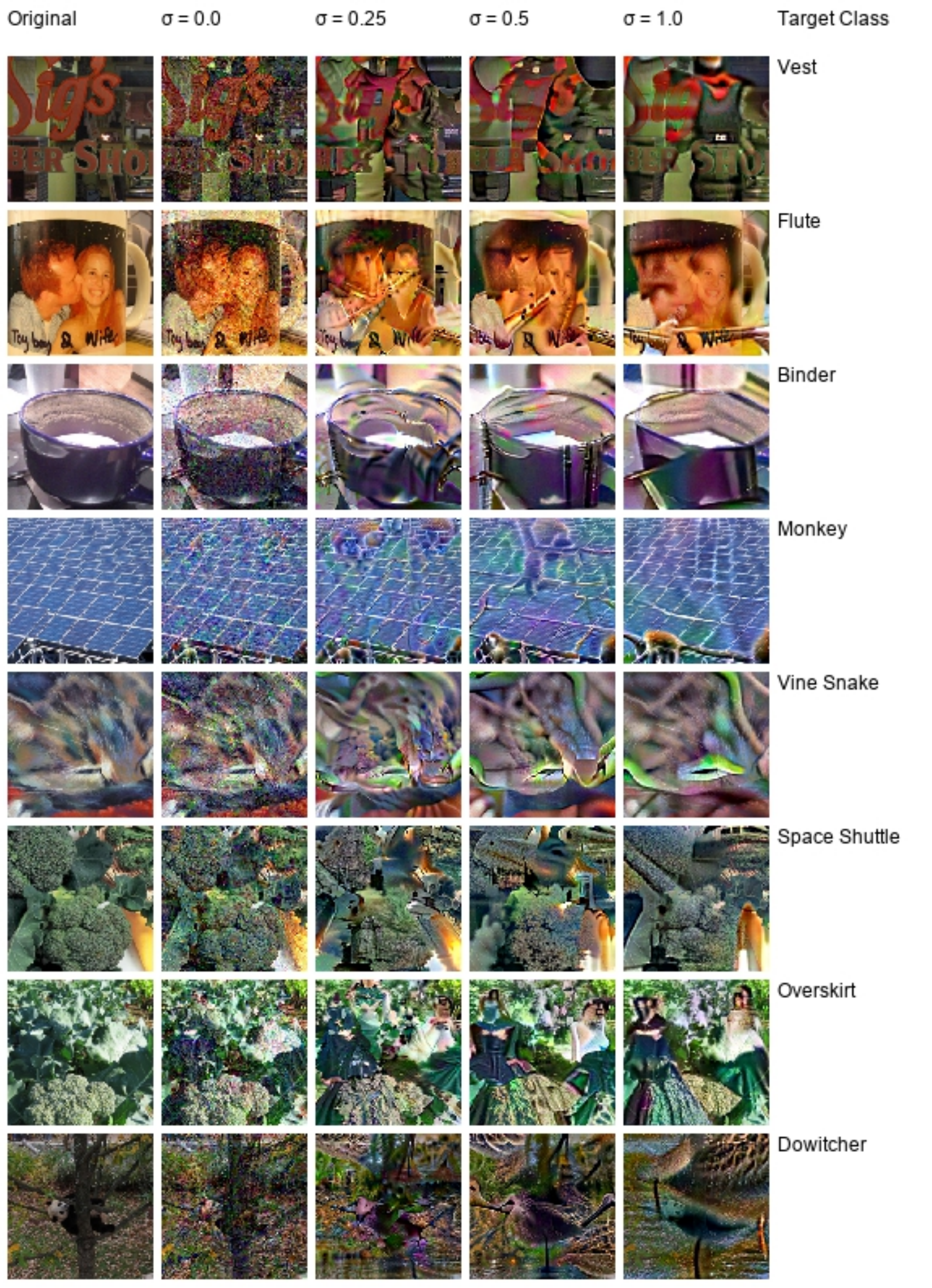}
  \caption{Large-$\epsilon$ adversarial examples crafted for smoothed neural networks 
  with different settings of the smoothing scale hyperparameter $\sigma$ (part 2 / 3).  Images and target classes were randomly chosen.  When $\sigma$ is large, the adversary tends to paint a single, coherent instance of the target class; when $\sigma$ is small,  the adversary tends to paint scattered features of the target class. Note that $\sigma=0.0$ corresponds to a vanilla-trained network.}
  \label{fig:noise-level-variation-pt2}
\end{figure}

\newpage

\begin{figure}[!h]
\centering
  \includegraphics[scale=0.5]{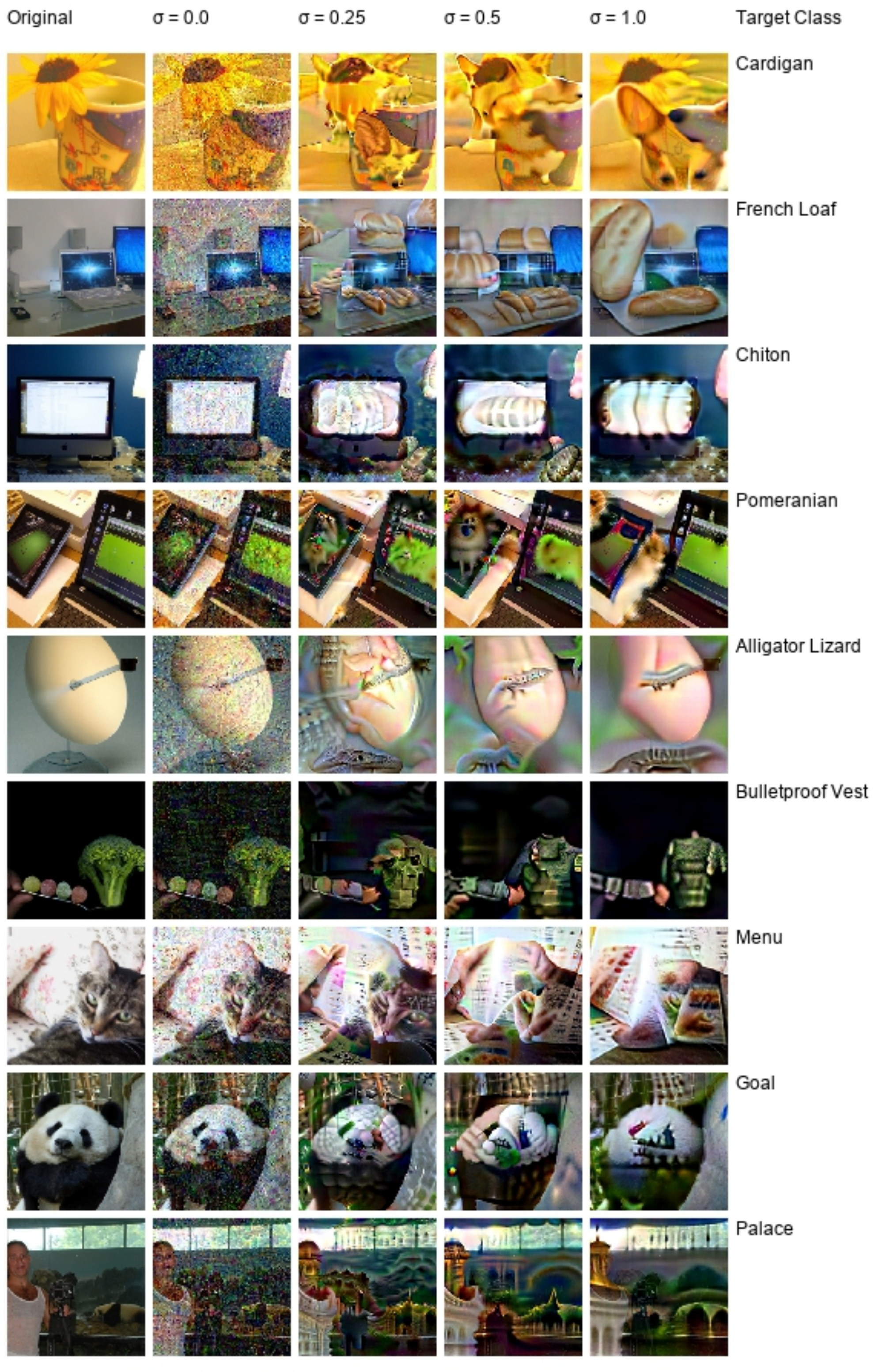}
  \caption{Large-$\epsilon$ adversarial examples crafted for smoothed neural networks 
  with different settings of the smoothing scale hyperparameter $\sigma$ (part 3 / 3).  Images and target classes were randomly chosen.  When $\sigma$ is large, the adversary tends to paint a single, coherent instance of the target class; when $\sigma$ is small,  the adversary tends to paint scattered features of the target class. Note that $\sigma=0.0$ corresponds to a vanilla-trained network.}
  \label{fig:noise-level-variation-pt3}
\end{figure}

\newpage

\begin{figure}[!h]
  \centering
  \includegraphics[scale=0.5]{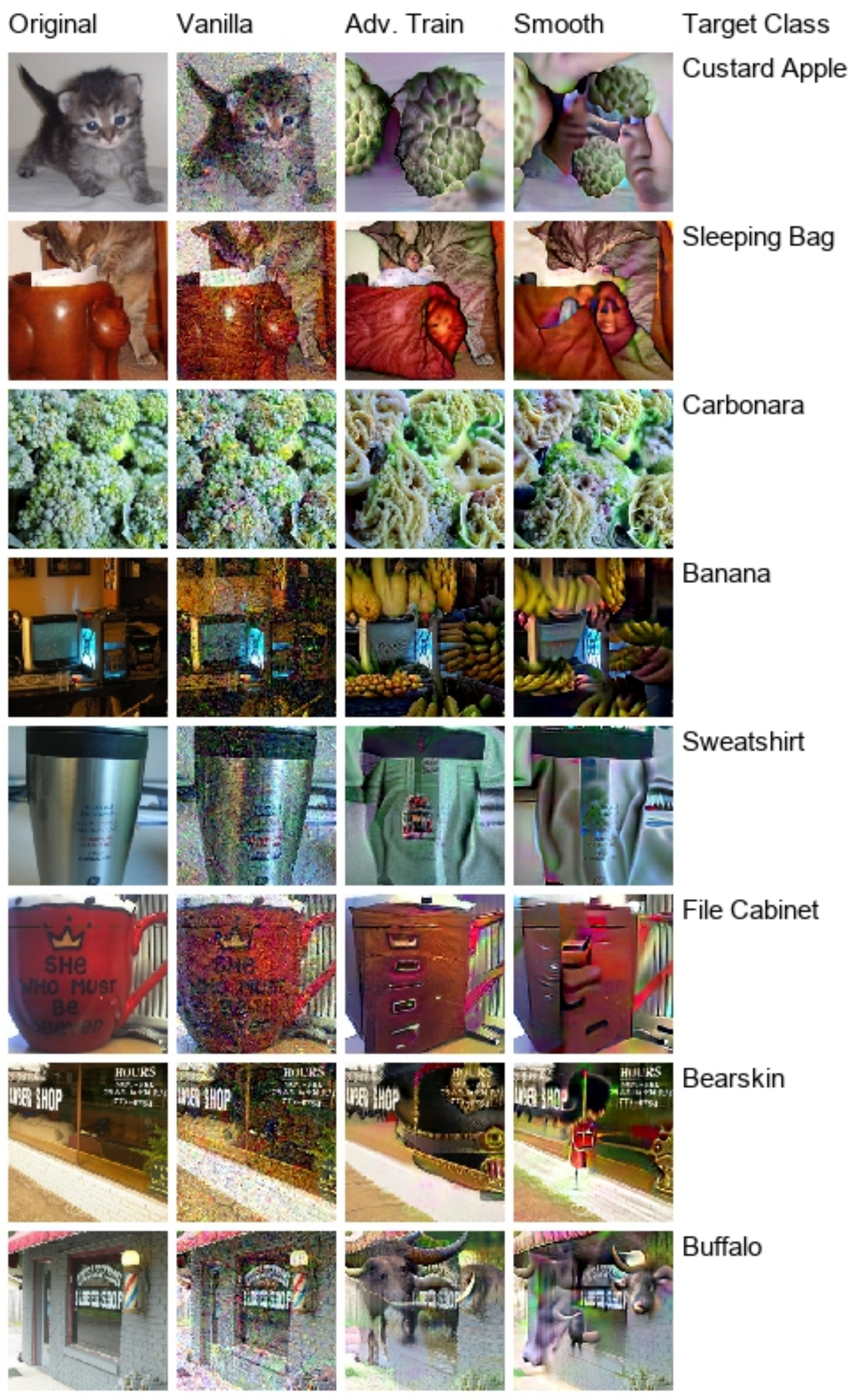}
  \caption{Large-$\epsilon$ targeted adversarial examples for a vanilla-trained network, 
  an adversarially trained network \protect\cite{madry2017towards}, and a smoothed network.  
  Adversarial examples for both robust classifiers visually resemble the targeted class, while adversarial examples for the vanilla classifier do not.
  All of these adversarial examples have perturbation size $\epsilon = 40$ (on images with pixels scaled to $[0, 1]$).}
  \label{fig:three-models-compared-all}
\end{figure}

\newpage
\section{Randomized Smoothing}
\label{sec:randomized-smoothing}
Randomized smoothing is relatively new to the literature, and few comprehensive references exist.
Therefore, in this appendix, we review some basic aspects of the technique.

\paragraph{Preliminaries}
Randomized smoothing refers to a class of adversarial defenses in which the robust classifier $g: \mathbb{R}^d \to [k]$ that maps from an input in $\mathbb{R}^d$ to a class in $[k] := \{1, \hdots, k\}$ is defined as:
$$ g(\mathbf{x}) = \argmax_{y \in [k]} \; \mathbb{E}_T[ f(T(\mathbf{x})) ]_y  .$$
Here, $f: \mathbb{R}^d \to \Delta_k$ is a neural network ``base classifier'' which maps from an input in $\mathbb{R}^d$ to a vector of class scores in $\Delta_k := \{\mathbf{z} \in \mathbb{R}^k: \mathbf{z} \ge 0, \sum_{j=1}^k z_j = 1\}$, the probability simplex of non-negative $k$-vectors that sum to 1.   $T$ is a randomization operation which randomly corrupts inputs in $\mathbb{R}^d$ to other inputs in $\mathbb{R}^d$, i.e. for any $\mathbf{x}$, $T(\mathbf{x})$ is a random variable.

Intuitively, the score which the smoothed classifier $g$ assigns to class $y$ for the input $\mathbf{x}$ is defined to be the \emph{expected} score that the base classifier $f$ assigns to the class $y$ for the random input $T(\mathbf{x})$.

The requirement that $f$ returns outputs in the probability simplex $\Delta_k$ can be satisfied in either of two ways.
In the ``soft smoothing'' formulation (presented in the main paper), $f$ is a neural network which ends in a softmax.
In the ``hard smoothing'' formulation, $f$ returns the indicator vector for a particular class, i.e. a length-$k$ vector with one 1 and the rest zeros, without exposing the intermediate class scores.
In the hard smoothing formulation, since the expectation of an indicator function is a probability, the smoothed classifier $g(\mathbf{x})$ can be interpreted as returning the most probable prediction by the classifier $f$ over the random variable $T(\mathbf{x})$.
Note that no papers have yet studied soft smoothing as a certified defense, though \cite{salman2019provably} approximated a hard smoothing classifier with the corresponding soft classifier in order to attack it. 

When the base classifier $f$ is a neural network, the smoothed classifier $g$ cannot be evaluated exactly, since it is not possible to exactly compute the expectation of a neural network's prediction over a random input.
However, by repeatedly sampling the random vector $f(T(\mathbf{x}))$, one can obtain upper and lower bounds on the expected value of each entry of that vector, which hold with high probability over the sampling procedure.
In the hard smoothing case, since each entry of $f(T(\mathbf{x}))$ is a Bernoulli random variable, one can use standard Bernoulli confidence intervals like the Clopper-Pearson, as in \cite{lecuyer2018certified, cohen2019certified}.
In the soft smoothing case, since each entry of $f(T(\mathbf{x}))$  is bounded in $[0, 1]$, one can use Hoeffding-style concentration inequalities to derive high-probability confidence intervals for the entries of $f(T(\mathbf{x}))$ .

\paragraph{Gaussian smoothing}
When $T$ is an additive Gaussian corruption,
$$ T(\mathbf{x}) = \mathbf{x} + \boldsymbol{\varepsilon}, \quad \boldsymbol{\varepsilon} \sim \mathcal{N}(0, \sigma^2 I), $$
the robust classifier $g: \mathbb{R}^d \to [k]$ is given by:
\begin{align}
g(\mathbf{x}) = \; \argmax_{j \in [k]} \; \hat{f}_\sigma(\mathbf{x}) \quad \text{where} \quad \hat{f}_\sigma(\mathbf{x}) = \mathbb{E}_{\boldsymbol{\varepsilon} \sim \mathcal{N}(0, \sigma^2 I)} [f(\mathbf{x} + \boldsymbol{\varepsilon})].
\label{eq:smoothing-defn}
\end{align}
Gaussian-smoothed classifiers are \emph{certifiably} robust under the $\ell_2$ norm: for any input $\mathbf{x}$, if we know $\hat{f}_\sigma(\mathbf{x})$, we can certify that $g$'s prediction will remain constant within an $\ell_2$ ball around $\mathbf{x}$:

\begin{thm}[Extension to ``soft smoothing'' of Theorem 1 from \cite{cohen2019certified}; see also Appendix A in \cite{salman2019provably}]
Let $f: \mathbb{R}^d \to \Delta_k$ be any function, and define $g$ and $\hat{f}_\sigma$ as in \eqref{eq:smoothing-defn}.
For some $\mathbf{x} \in \mathbb{R}^d$, let $y_1, y_2 \in [k]$ be the indices of the largest and second-largest entries of $\hat{f}_\sigma(\mathbf{x})$.  Then $g(\mathbf{x} + \boldsymbol{\delta}) = y_1$ for any $\boldsymbol{\delta}$ with
$$ \|\boldsymbol{\delta}\|_2 \le \frac{\sigma}{2} \left( \Phi^{-1}(\hat{f}_\sigma(\mathbf{x})_{y_1}) - \Phi^{-1}(\hat{f}_\sigma(\mathbf{x})_{y_2}) \right). $$
\label{thm:main-theorem}
\end{thm}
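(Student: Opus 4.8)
The plan is to reduce everything to a single coordinatewise Lipschitz property and then assemble the margin bound by an elementary comparison. Concretely, for each class $j \in [k]$ I would define the scalar field $p_j(\mathbf{x}) := \hat{f}_\sigma(\mathbf{x})_j = \mathbb{E}_{\boldsymbol{\varepsilon}\sim\mathcal{N}(0,\sigma^2 I)}[f(\mathbf{x}+\boldsymbol{\varepsilon})_j]$. Since $f$ maps into $\Delta_k$, each component $f(\cdot)_j$ is bounded in $[0,1]$, so $p_j(\mathbf{x})\in[0,1]$ and $\Phi^{-1}(p_j(\mathbf{x}))$ is well defined. The key lemma I would prove is that the map $\mathbf{x}\mapsto \Phi^{-1}(p_j(\mathbf{x}))$ is $(1/\sigma)$-Lipschitz; granting this, the theorem is a short computation.

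For the lemma, fix a perturbation $\boldsymbol{\delta}$ and a coordinate $j$, and ask: over all measurable $h:\mathbb{R}^d\to[0,1]$ with $\mathbb{E}_{\mathbf{z}\sim\mathcal{N}(\mathbf{x},\sigma^2 I)}[h(\mathbf{z})]$ held fixed at $p$, how large can $\mathbb{E}_{\mathbf{z}\sim\mathcal{N}(\mathbf{x}+\boldsymbol{\delta},\sigma^2 I)}[h(\mathbf{z})]$ be? This is a linear program in $h$, and a Neyman--Pearson argument identifies the maximizer as a threshold on the likelihood ratio between the two shifted Gaussians, which (the ratio being monotone in $\boldsymbol{\delta}^\top(\mathbf{z}-\mathbf{x})$) is exactly a half-space $\{\mathbf{z}:\boldsymbol{\delta}^\top(\mathbf{z}-\mathbf{x})\ge t\}$ whose boundary is perpendicular to $\boldsymbol{\delta}$. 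For such a half-space both expectations collapse to one-dimensional Gaussian tail probabilities along $\boldsymbol{\delta}/\|\boldsymbol{\delta}\|$: choosing $t$ so the value at $\mathbf{x}$ equals $p$ forces $t=-\sigma\,\Phi^{-1}(p)$, and evaluating at $\mathbf{x}+\boldsymbol{\delta}$ (a mean-shift of $\|\boldsymbol{\delta}\|$) gives $\Phi(\Phi^{-1}(p)+\|\boldsymbol{\delta}\|/\sigma)$. Hence $\Phi^{-1}(p_j(\mathbf{x}+\boldsymbol{\delta}))\le\Phi^{-1}(p_j(\mathbf{x}))+\|\boldsymbol{\delta}\|/\sigma$, and the matching lower bound follows by replacing $h$ with $1-h$. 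The only place the ``soft'' generalization enters is here: the optimization ranges over all $[0,1]$-valued $h$ rather than $\{0,1\}$-valued ones, but the Neyman--Pearson optimizer is already an indicator (an extreme point of the feasible set), so the optimal value, and hence the Lipschitz constant, is unchanged from the hard-smoothing case of \cite{cohen2019certified}.

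To finish, write $p_{y_1}\ge p_{y_2}\ge\cdots$ for the sorted entries of $\hat{f}_\sigma(\mathbf{x})$. Applying the lower bound of the lemma to $y_1$ and the upper bound to any $j\ne y_1$, together with monotonicity of $\Phi^{-1}$ and $p_j(\mathbf{x})\le p_{y_2}$, yields
$$\Phi^{-1}(p_{y_1}(\mathbf{x}+\boldsymbol{\delta})) \ge \Phi^{-1}(p_{y_1}) - \tfrac{\|\boldsymbol{\delta}\|}{\sigma}, \qquad \Phi^{-1}(p_j(\mathbf{x}+\boldsymbol{\delta})) \le \Phi^{-1}(p_{y_2}) + \tfrac{\|\boldsymbol{\delta}\|}{\sigma}.$$
The hypothesized bound $\|\boldsymbol{\delta}\|\le\frac{\sigma}{2}(\Phi^{-1}(p_{y_1})-\Phi^{-1}(p_{y_2}))$ is precisely what makes the right-hand side of the first inequality dominate that of the second, so $p_{y_1}(\mathbf{x}+\boldsymbol{\delta})\ge p_j(\mathbf{x}+\boldsymbol{\delta})$ for every $j\ne y_1$, whence $g(\mathbf{x}+\boldsymbol{\delta})=y_1$. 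I expect the Neyman--Pearson half-space optimality to be the main obstacle; the rest is bookkeeping. The one technicality to flag is the boundary case of the non-strict inequality, where $y_1$ merely ties for the maximum and one must appeal to the tie-breaking convention in the definition of $\argmax$; in the interior the inequalities are strict.
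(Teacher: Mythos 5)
Your proposal is correct and takes essentially the same approach as the paper: your key Lipschitz lemma is exactly the paper's Lemma~\ref{lemma:convolution-lipschitz} (there cited from \cite{salman2019provably, levine2019certifiably} and noted to be equivalent to $1/\sigma$-Lipschitzness of $\mathbf{x} \mapsto \Phi^{-1}(\hat{h}_\sigma(\mathbf{x}))$), and your assembly---lower bound for $y_1$, upper bound for each $j \neq y_1$, monotonicity of $\Phi^{-1}$, with the minimum over $j$ attained at $y_2$---is the paper's proof of Theorem~\ref{thm:main-theorem} almost verbatim. The only differences are that you additionally sketch the Neyman--Pearson derivation of the lemma (which the paper leaves as a citation) and you flag the tie-breaking subtlety in the $\argmax$, which the paper silently elides; both are sound additions rather than deviations.
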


Theorem \ref{thm:main-theorem} is easy to prove using the following mathematical fact:

\begin{lemma}[Lemma 2 from \cite{salman2019provably}, Lemma 1 from \cite{levine2019certifiably}]
Let $h: \mathbb{R}^d \to [0, 1]$ be any function, and define its Gaussian convolution $\hat{h}_\sigma$ as
    $\hat{h}_\sigma(\mathbf{x}) = \mathbb{E}_{\boldsymbol{\varepsilon} \sim \mathcal{N}(0, \sigma^2 I)}[h(\mathbf{x} + \boldsymbol{\varepsilon})]$.
Then, for any input $\mathbf{x} \in \mathbb{R}^d$ and any perturbation $\boldsymbol{\delta} \in \mathbb{R}^d$,
\begin{align*}
   \Phi \left( \Phi^{-1}(\hat{h}_\sigma(\mathbf{x})) - \frac{\|\boldsymbol \delta\|_2}{\sigma}  \right) \le \hat{h}_\sigma(\mathbf{x} + \boldsymbol{\delta}) \le \Phi \left(\Phi^{-1}(\hat{h}_\sigma(\mathbf{x})) + \frac{\|\boldsymbol \delta\|_2}{\sigma}  \right).
\end{align*}
\label{lemma:convolution-lipschitz}
\end{lemma}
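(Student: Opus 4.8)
The plan is to reduce the two-sided bound to a single Lipschitz estimate. The key observation is that both displayed inequalities are equivalent to the single statement that the map $\mathbf{x} \mapsto \Phi^{-1}(\hat{h}_\sigma(\mathbf{x}))$ is $(1/\sigma)$-Lipschitz in the $\ell_2$ norm. Indeed, if $|\Phi^{-1}(\hat{h}_\sigma(\mathbf{x}+\boldsymbol{\delta})) - \Phi^{-1}(\hat{h}_\sigma(\mathbf{x}))| \le \|\boldsymbol{\delta}\|_2/\sigma$, then rearranging this into its two one-sided forms and applying the monotone increasing function $\Phi$ recovers exactly the upper and lower bounds in the statement. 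So the entire lemma follows once this Lipschitz property is established, and I would open the proof by recording this reduction (noting that $\hat{h}_\sigma$ takes values strictly in $(0,1)$ whenever $h$ is not almost everywhere $0$ or $1$, since the Gaussian kernel has full support, so that $\Phi^{-1}$ is well defined and smooth).

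To establish Lipschitzness I would work with the gradient of $\hat{h}_\sigma$. Because $\hat{h}_\sigma$ is the Gaussian convolution of a bounded measurable function, it is smooth, and differentiating under the integral sign (writing the convolution against the Gaussian density and moving the derivative onto the kernel) yields the Stein-type identity $\nabla_{\mathbf{x}} \hat{h}_\sigma(\mathbf{x}) = \frac{1}{\sigma^2}\, \mathbb{E}_{\boldsymbol{\varepsilon}}[\,\boldsymbol{\varepsilon}\, h(\mathbf{x}+\boldsymbol{\varepsilon})\,]$. By the chain rule together with $(\Phi^{-1})'(p) = 1/\phi(\Phi^{-1}(p))$, where $\phi$ denotes the standard normal density, it then suffices to prove the pointwise gradient bound $\|\nabla_{\mathbf{x}} \hat{h}_\sigma(\mathbf{x})\|_2 \le \phi(\Phi^{-1}(\hat{h}_\sigma(\mathbf{x})))/\sigma$, after which integrating the gradient along the segment from $\mathbf{x}$ to $\mathbf{x}+\boldsymbol{\delta}$ gives the Lipschitz constant $1/\sigma$.

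The heart of the argument is this gradient bound, which I would obtain by a Neyman--Pearson / ``bathtub'' extremal argument. Fix a unit vector $\mathbf{u}$ and write $p = \hat{h}_\sigma(\mathbf{x})$; the directional derivative is $\mathbf{u} \cdot \nabla_{\mathbf{x}} \hat{h}_\sigma(\mathbf{x}) = \frac{1}{\sigma^2}\, \mathbb{E}_{\boldsymbol{\varepsilon}}[\,(\mathbf{u}\cdot\boldsymbol{\varepsilon})\, h(\mathbf{x}+\boldsymbol{\varepsilon})\,]$. Among all $h$ valued in $[0,1]$ subject to $\mathbb{E}_{\boldsymbol{\varepsilon}}[h(\mathbf{x}+\boldsymbol{\varepsilon})] = p$, this linear functional is maximized by putting all the mass where the scalar $\mathbf{u}\cdot\boldsymbol{\varepsilon}$ is largest, i.e.\ by the half-space indicator $h(\mathbf{x}+\boldsymbol{\varepsilon}) = \mathbf{1}[\mathbf{u}\cdot\boldsymbol{\varepsilon} \ge c]$. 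Since $\mathbf{u}\cdot\boldsymbol{\varepsilon} \sim \mathcal{N}(0,\sigma^2)$, the constraint forces $c/\sigma = \Phi^{-1}(1-p) = -\Phi^{-1}(p)$, and the one-dimensional identity $\mathbb{E}[Z\,\mathbf{1}[Z\ge a]] = \phi(a)$ for $Z \sim \mathcal{N}(0,1)$ evaluates the maximum to exactly $\phi(\Phi^{-1}(p))/\sigma$. As $\mathbf{u}$ was arbitrary, this controls $\|\nabla_{\mathbf{x}} \hat{h}_\sigma(\mathbf{x})\|_2$ as required.

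I expect the main obstacle to be making the extremal step genuinely rigorous rather than heuristic: justifying that the thresholding rule is optimal (via a rearrangement / Neyman--Pearson argument against the bounded density of $\mathbf{u}\cdot\boldsymbol{\varepsilon}$) and handling the boundary cases $p \in \{0,1\}$, where $\Phi^{-1}$ is infinite and the inequalities must be read as limits or as vacuous. The differentiation under the integral and the final line integration are routine. As an alternative that avoids gradients entirely, one could apply Neyman--Pearson directly to the pair of Gaussians $\mathcal{N}(\mathbf{x},\sigma^2 I)$ and $\mathcal{N}(\mathbf{x}+\boldsymbol{\delta},\sigma^2 I)$: the extremal $h$ is then a half-space orthogonal to $\boldsymbol{\delta}$, and projecting onto the $\boldsymbol{\delta}$ direction collapses both $\hat{h}_\sigma(\mathbf{x})$ and $\hat{h}_\sigma(\mathbf{x}+\boldsymbol{\delta})$ to one-dimensional Gaussian integrals that yield the stated bounds immediately.
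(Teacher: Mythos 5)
Your proposal is mathematically sound. Note, however, that the paper itself never proves this lemma --- it imports it by citation (Lemma 2 of Salman et al., Lemma 1 of Levine et al.) and only remarks, immediately after the statement, that the lemma is equivalent to $1/\sigma$-Lipschitzness of $\mathbf{x} \mapsto \Phi^{-1}(\hat{h}_\sigma(\mathbf{x}))$; so there is no internal proof to compare against. Your argument is essentially the standard one from those cited sources: the reduction to the Lipschitz statement (which matches the paper's remark), the Stein identity $\nabla \hat{h}_\sigma(\mathbf{x}) = \frac{1}{\sigma^2}\mathbb{E}[\boldsymbol{\varepsilon}\, h(\mathbf{x}+\boldsymbol{\varepsilon})]$, and the extremal half-space bound $\|\nabla \hat{h}_\sigma(\mathbf{x})\|_2 \le \phi(\Phi^{-1}(\hat{h}_\sigma(\mathbf{x})))/\sigma$ followed by integration along the segment. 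The one step you flag as needing rigor --- optimality of the threshold rule --- closes with a two-line verification: if $g^*(\boldsymbol{\varepsilon}) = \mathbf{1}[\mathbf{u}\cdot\boldsymbol{\varepsilon} \ge c]$ and $g$ is any competitor in $[0,1]$ with the same mean, then $\mathbb{E}[(\mathbf{u}\cdot\boldsymbol{\varepsilon})(g^*-g)] = \mathbb{E}[(\mathbf{u}\cdot\boldsymbol{\varepsilon}-c)(g^*-g)] \ge 0$, since the integrand is pointwise nonnegative (where $\mathbf{u}\cdot\boldsymbol{\varepsilon}\ge c$ one has $g^*=1\ge g$, and where $\mathbf{u}\cdot\boldsymbol{\varepsilon}<c$ one has $g^*=0\le g$). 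The boundary cases are also benign: since the Gaussian has full support, $\hat{h}_\sigma(\mathbf{x})\in\{0,1\}$ at any point forces $h$ to be $0$ (or $1$) almost everywhere, so $\hat{h}_\sigma$ is constant and the claimed inequalities hold under the usual conventions $\Phi(\pm\infty)=1,0$. Your closing alternative --- Neyman--Pearson applied directly to $\mathcal{N}(\mathbf{x},\sigma^2 I)$ versus $\mathcal{N}(\mathbf{x}+\boldsymbol{\delta},\sigma^2 I)$, reducing to one-dimensional Gaussian integrals over half-spaces --- is also correct, avoids differentiation entirely, and is the route taken in Appendix A of Salman et al.; either version would serve as a complete proof.
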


Intuitively, Lemma \ref{lemma:convolution-lipschitz} says that $\hat{h}_\sigma(\mathbf{x} + \boldsymbol{\delta})$ cannot be too much larger or too much smaller than $\hat{h}_\sigma(\mathbf{x})$.
If this has the feel of a Lipschitz guarantee, there is good reason: Lemma \ref{lemma:convolution-lipschitz} is equivalent to the statement that the function $\mathbf{x} \mapsto \Phi^{-1}(\hat{h}_\sigma(\mathbf{x}))$ is $1/\sigma$-Lipschitz.

Theorem \ref{thm:main-theorem} is a direct consequence of Lemma \ref{lemma:convolution-lipschitz}:

\begin{proof}[Proof of Theorem \ref{thm:main-theorem}]

Since the outputs of $\hat{f}_\sigma$ live in the probability simplex, for each class $j$ the function $\hat{f}_\sigma(\cdot)_j$ has output bounded in $[0, 1]$, and hence can be viewed as a function $\hat{h}_\sigma$ for which the condition of Lemma \ref{lemma:convolution-lipschitz} applies.

Therefore, from applying Lemma \ref{lemma:convolution-lipschitz} to $\hat{f}_\sigma(\cdot)_{y_1}$, we know that:
$$ \hat{f}_\sigma(\mathbf{x} + \boldsymbol{\delta})_{y_1} \ge \Phi \left(\Phi^{-1}(\hat{f}_\sigma(\mathbf{x})_{y_1}) - \frac{\|\boldsymbol \delta\|_2}{\sigma} \right) $$
and, for any $j$, from applying Lemma \ref{lemma:convolution-lipschitz} to $\hat{f}_\sigma(\cdot)_{j}$, we know that:
$$ \Phi \left(\Phi^{-1}(\hat{f}_\sigma(\mathbf{x})_j) + \frac{\|\boldsymbol \delta\|_2}{\sigma} \right) \ge \hat{f}_\sigma(\mathbf{x} + \boldsymbol{\delta})_j. $$
Combining these two results, it follows that a sufficient condition for $\hat{f}_\sigma(\mathbf{x} + \boldsymbol{\delta})_{y_1} \ge \hat{f}_\sigma(\mathbf{x} + \boldsymbol{\delta})_j$ is:
$$  \Phi \left(\Phi^{-1}(\hat{f}_\sigma(\mathbf{x})_{y_1}) - \frac{\|\boldsymbol \delta\|_2}{\sigma} \right) \ge  \Phi \left(\Phi^{-1}(\hat{f}_\sigma(\mathbf{x})_j) + \frac{\|\boldsymbol \delta\|_2}{\sigma} \right), $$
or equivalently,
$$ \|\boldsymbol{\delta}\|_2 \le \frac{\sigma}{2}( \Phi^{-1}(\hat{f}_\sigma(\mathbf{x})_{y_1} - \Phi^{-1}(\hat{f}_\sigma (\mathbf{x})_j) ). $$
Hence, we can conclude that $\hat{f}_\sigma(\mathbf{x} + \boldsymbol{\delta})_{y_1} \ge \max_{j \neq y_1} \hat{f}_\sigma(\mathbf{x} + \boldsymbol{\delta})_j$ so long as 
$$ \|\boldsymbol{\delta}\|_2 \le \min_{j \neq y_1} \left \{ \frac{\sigma}{2}( \Phi^{-1}(\hat{f}_\sigma(\mathbf{x})_{y_1} - \Phi^{-1}(\hat{f}_\sigma (\mathbf{x})_j) ) \right \} =  \frac{\sigma}{2}( \Phi^{-1}(\hat{f}_\sigma(\mathbf{x})_{y_1} - \Phi^{-1}(\hat{f}_\sigma (\mathbf{x})_{y_2}) ) $$

\end{proof}

\paragraph{Training}

Given a dataset, a base classifier architecture, and a smoothing level $\sigma > 0$, it currently an active research question to figure out the best way to train the base classifier $f$ so that the smoothed classifier $g$ will attain high certified or empirical robust accuracies.
The original randomized smoothing paper \cite{lecuyer2018certified} proposed training $f$ with Gaussian data augmentation and the standard cross-entropy loss.
However, \cite{salman2019provably} and \cite{li2018second, carmon2019unlabeled} showed that alternative training schemes yield substantial gains in certified accuracy.
In particular, \cite{salman2019provably} proposed training $f$ by performing adversarial training on $g$, and \cite{li2018second, carmon2019unlabeled} proposed training $f$ via stability training \cite{zheng2016improving}.

\paragraph{Related work}
Gaussian smoothing was first proposed as a certified adversarial defense by \cite{lecuyer2018certified} under the name ``PixelDP,'' though similar techniques had been proposed earlier as a heuristic defenses in \cite{cao2017mitigating, liu2018towards}.
Subsequently, \cite{li2018second} proved a stronger robustness guarantee, and finally \cite{cohen2019certified} derived the tightest possible robustness guarantee in the ``hard smooothing'' case, which was extended to the ``soft smoothing'' case by \cite{levine2019certifiably, salman2019provably}.

Concurrently, \cite{zhang2019discretization} proved a robustness guarantee in $\ell_\infty$ norm for Gaussian smoothing; however, since Gaussian smoothing specifically confers $\ell_2$ (not $\ell_\infty$) robustness \cite{cohen2019certified}, the certified accuracy numbers reported in \cite{zhang2019discretization} were weak.

\cite{pinot2019theoretical} gave theoretical and empirical arguments for an adversarial defense similar to randomized smoothing, but did not position their method as a certified defense.

\cite{lee2019stratified} have extended randomized smoothing beyond Gaussian noise / $\ell_2$ norm by proposing a randomization scheme which allows for certified robustness in the $\ell_0$ norm.

\newpage
\section{Details on Generating Images}
\label{sec:generating-details}
This appendix details the procedure used to generate the images that appeared in this paper. 

As in \cite{santurkar2019image}, to generate an image $\mathbf{x}^* \in \mathbb{R}^d$ near the starting image $\mathbf{x}_0$ that is classified by a smoothed neural network $\hat{f}_\sigma$ as some target class $t$, we use projected steepest descent to solve the optimization problem:
\begin{align}
    \mathbf{x}^* = \argmin_{\mathbf{x}: \; \| \mathbf{x} - \mathbf{x}_0\|_2 \le \epsilon} \; L(\hat{f}_\sigma, \mathbf{x}, t)
    \label{eq:problem}
\end{align}
where $L$ is a loss function measuring the extent to which $\hat{f}_\sigma$ classifies $\mathbf{x}$ as class $t$.

The two big choices which need to be made are: which loss function to use, and how to compute its gradient?

\paragraph{Loss functions for adversarially-trained networks}
We first review two loss functions for generating images using adversarially-trained neural networks.
Our loss functions for smoothed neural networks (presented below) are inspired by these.

The first is the cross-entropy loss.  If $f^{\text{adv}}: \mathbb{R}^d \to \Delta_k$ is an (adversarially trained) neural network classifier that ends in a softmax layer (so that its output lies on the probability simplex $\Delta_k$), the cross-entropy loss is defined as:
\begin{align*}
    L_{\text{CE}}(f^{\text{adv}}, \mathbf{x}, t) := - \log f^{\text{adv}}(\mathbf{x})_t.
\end{align*}

The second is the ``target class max'' (TCM) loss.
If we write $f^{\text{adv}}$ as $f^{\text{adv}}(\mathbf{x}) = \softmax(\logits(\mathbf{x}))$, where $\logits: \mathbb{R}^d \to \mathbb{R}^k$ is $f^{\text{adv}}$ minus the final softmax layer, then 
the TCM loss is defined as:
\begin{align*}
    L_{\text{TCM}}(f^{\text{adv}}, \mathbf{x}, t) := -\logits(\mathbf{x})_t.
\end{align*}
In other words, minimizing $L_{\text{TCM}}$ will maximize the score that $\logits$ assigns to class $t$.

Since $f^{\text{adv}}$ is just a neural network, computing the gradients of these loss functions can be easily done using automatic differentiation.  (The situation is more complicated for smoothed neural networks.)

We note that \cite{santurkar2019image} used $L_{\text{CE}}$ in their experiments.


\paragraph{Loss functions for smoothed networks}
Our loss functions for smoothed neural networks are inspired by those described above for adversarially trained networks.
If $\hat{f}_\sigma$ is a smoothed neural network of the form $\hat{f}_\sigma(\mathbf{x}) = \mathbb{E}_{\boldsymbol \varepsilon \sim \mathcal{N}(0, \sigma^2 I)}[ f(\mathbf x + \boldsymbol \varepsilon) ]$, with $f$ a neural network that ends in a softmax layer, then the cross-entropy loss is defined as:
\begin{align}
    L_{\text{CE}}(\hat{f}_\sigma, \mathbf{x}, t) := - \log \hat{f}_\sigma(\mathbf{x})_t = - \log \mathbb{E}_{ \boldsymbol \varepsilon \sim \mathcal{N}(0, \sigma^2 I)} [ f(\mathbf{x} + \boldsymbol{\varepsilon})_t ].
    \label{eq:cross-entropy}
\end{align}

If we decompose $f$ as $f(\mathbf{x}) = \softmax(\logits(\mathbf{x}))$, where $\logits: \mathbb{R}^d \to \mathbb{R}^k$ is $f$ minus the softmax layer, then the TCM loss is defined as:
\begin{align}
    L_{\text{TCM}}(\hat{f}_\sigma, \mathbf{x}, t) := - \mathbb{E}_{ \boldsymbol \varepsilon \sim \mathcal{N}(0, \sigma^2 I)} [ \logits(\mathbf{x} + \boldsymbol{\varepsilon})_t ].
    \label{eq:tcm}
\end{align}
In other words, minimizing $ L_{\text{TCM}}$ will maximize the expected logit of class $t$ for the random input $\mathbf{x} + \boldsymbol{\varepsilon}$

\paragraph{Gradient estimators}

To solve problem \eqref{eq:problem} using PGD, we need to be able to compute the gradient of the objective w.r.t $\mathbf{x}$.
However, for smoothed neural networks, it is not possible to exactly compute the gradient of either $L_{\text{CE}}$  or $L_{\text{TCM}}$.
We therefore must resort to gradient estimates obtained using Monte Carlo sampling.

For $L_{\text{TCM}}$, we use the following unbiased gradient estimator:
\begin{align*}
    \nabla_{\mathbf{x}} L_{\text{TCM}}(\hat{f}_\sigma, \mathbf{x}, t) \approx - \frac{1}{N} \sum_{i=1}^N \nabla_{\mathbf{x}} \logits(\mathbf{x} + \boldsymbol{\varepsilon}_i)_t, \quad \boldsymbol{\varepsilon}_i \sim \mathcal{N}(0, \sigma^2 I)
\end{align*}

This estimator is unbiased since
\begin{align*}
    \mathbb{E}_{\boldsymbol{\varepsilon}_1, \hdots, \boldsymbol{\varepsilon}_N \sim \mathcal{N}(0, \sigma^2 I)} \left[  - \frac{1}{N} \sum_{i=1}^N \nabla_{\mathbf{x}} \logits(\mathbf{x} + \boldsymbol{\varepsilon}_i)_t \right]
    &= \mathbb{E}_{\boldsymbol{\varepsilon} \sim \mathcal{N}(0, \sigma^2 I)} \left[  - \nabla_{\mathbf{x}} \logits(\mathbf{x} + \boldsymbol{\varepsilon})_t \right] \\
    &=  \nabla_{\mathbf{x}} \; \mathbb{E}_{\boldsymbol{\varepsilon} \sim \mathcal{N}(0, \sigma^2 I)} \left[  - \logits(\mathbf{x} + \boldsymbol{\varepsilon})_t \right].
\end{align*}

For $L_{\text{CE}}$, we are unaware of any unbiased gradient estimator, so, following \cite{salman2019provably}, we use the following biased ``plug-in'' gradient estimator:

\begin{align*}
    \nabla_{\mathbf{x}} L_{\text{CE}}(\hat{f}_\sigma, \mathbf{x}, t) \approx \nabla_{\mathbf{x}} \left[  - \log \left( \frac{1}{N} \sum_{i=1}^N  f(\mathbf{x} + \boldsymbol{\varepsilon}_i)_t \right) \right], \quad \boldsymbol{\varepsilon}_i \sim \mathcal{N}(0, \sigma^2 I)
\end{align*}

\paragraph{Experimental comparison between loss functions}

Figure \ref{fig:three-objectives} shows large-$\epsilon$ adversarial examples crafted for a smoothed neural network using both $L_{\text{TCM}}$ and $L_{\text{CE}}$.
The adversarial examples crafted using $L_{\text{TCM}}$  seem to better perceptually resemble the target class.
Therefore, in this work we primarily use $L_{\text{TCM}}$.

\paragraph{Experimental comparison between training procedures}

For most of the figures in this paper, we used a base classifier from \cite{cohen2019certified} trained using Gaussian data augmentation.
However, in Figures \ref{fig:adv-smooth-pt1}-\ref{fig:adv-smooth-pt3}, we compare large-$\epsilon$ adversarial examples for this base classifier to those synthesized for a base classifier trained using the \textsc{SmoothAdv} procedure from \cite{salman2019provably}, which was shown in that paper to attain much better certified accuracies than the network from \cite{cohen2019certified}.
We find that there does not seem to be a large difference in the perceptual quality of the generated images.
Therefore, throughout this paper we used the network from \cite{cohen2019certified}, since we wanted to emphasize that perceptually-aligned gradients arise even with robust classifiers that do not involve adversarial training of any kind.

\paragraph{Experimental study of number of Monte Carlo samples}

One important question is how many Monte Carlo samples $N$ are needed when computing the gradient of $L_{\text{TCM}}$ or $L_{\text{CE}}$.
In Figure \ref{fig:noise-sample-variation} we show large-$\epsilon$ adversarial examples synthesized using $N \in \{1, 5, 20, 25, 50, 75\}$ Monte Carlo samples. 
There does not seem to be a large difference between using $N=20$ samples or using more than 20.
Images synthesized using $N=1$ samples do appear a bit less developed than the others (e.g. the terrier with $N=1$ is has fewer ears than when $N$ is large.)
In this work, we primarily used $N=20$.

\paragraph{Hyperparameters} 
The following table shows the hyperparameter settings for all of the figures in this paper.

\begin{center}
\begin{tabular}{ r | c c c c c}
 Figure & $\sigma$ & number of PGD steps & $\epsilon$ & PGD step size & $N$ \\ 
 \hline
 \ref{fig:three-models-compared}, \ref{fig:three-models-compared-all} & 0.5 & 300 & 40.0 & 2.8 (vanilla), 0.7 & 20 \\ 
 \ref{fig:large-epsilon} & 0.5 & 300 & 40.0 & 0.7 & 20 \\ 
 \ref{fig:generation6} & 0.5 & 300 & 40.0 & 0.7 & 20 \\ 
 \ref{fig:noise-level}, \ref{fig:noise-level-variation-pt1}-\ref{fig:noise-level-variation-pt3}  & vary & 300 & 40.0 & 2.8 ($\sigma$ = 0), 0.7 & 20 \\ 
 \ref{fig:adv-smooth-pt1}-\ref{fig:adv-smooth-pt3}  & 0.5, 1.0 & 300 & 40.0 & 0.7 & 20 \\ 
 \ref{fig:three-objectives}  & 0.5 & 300 & 40.0 & 2.0 (CE), 0.7 & 20 \\ 
 \ref{fig:noise-sample-variation}  & 0.5 & 300 & 40.0 & 0.7 & vary \\ 
\end{tabular}
\end{center}

Note that Figures \ref{fig:three-models-compared} and \ref{fig:three-models-compared-all} only use \textit{stepSize} = 2.8 in the Vanilla column, Figure \ref{fig:three-objectives} only uses \textit{stepSize} = 2.0 in the C-E Loss column, and Figures \ref{fig:noise-level} and \ref{fig:noise-level-variation-pt1}-\ref{fig:noise-level-variation-pt3} only use \textit{stepSize} = 2.8 for $\sigma$ = 0.

\newpage

\begin{figure}[h!]
\centering
  \includegraphics[scale=0.5]{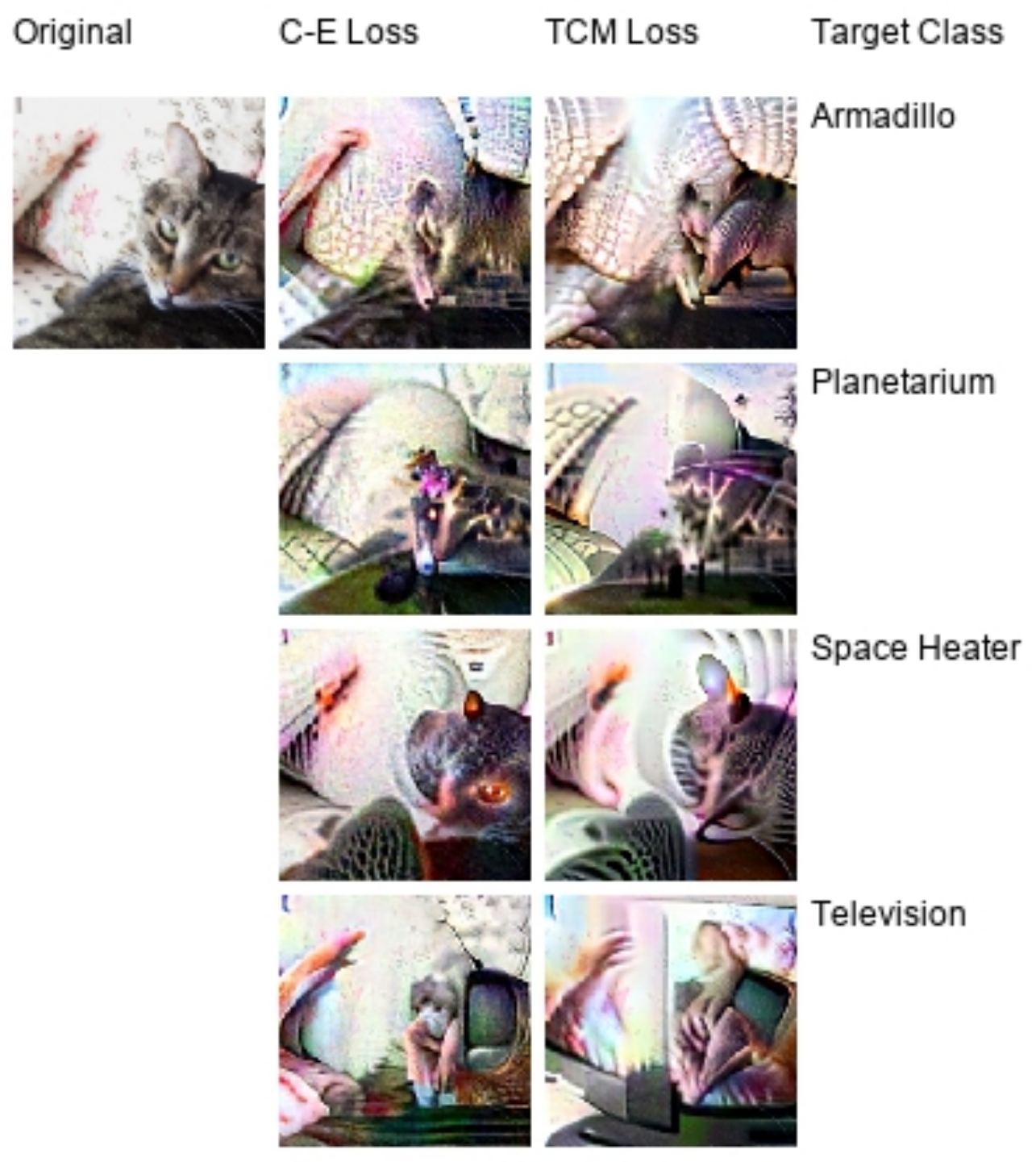} \quad
  \includegraphics[scale=0.5]{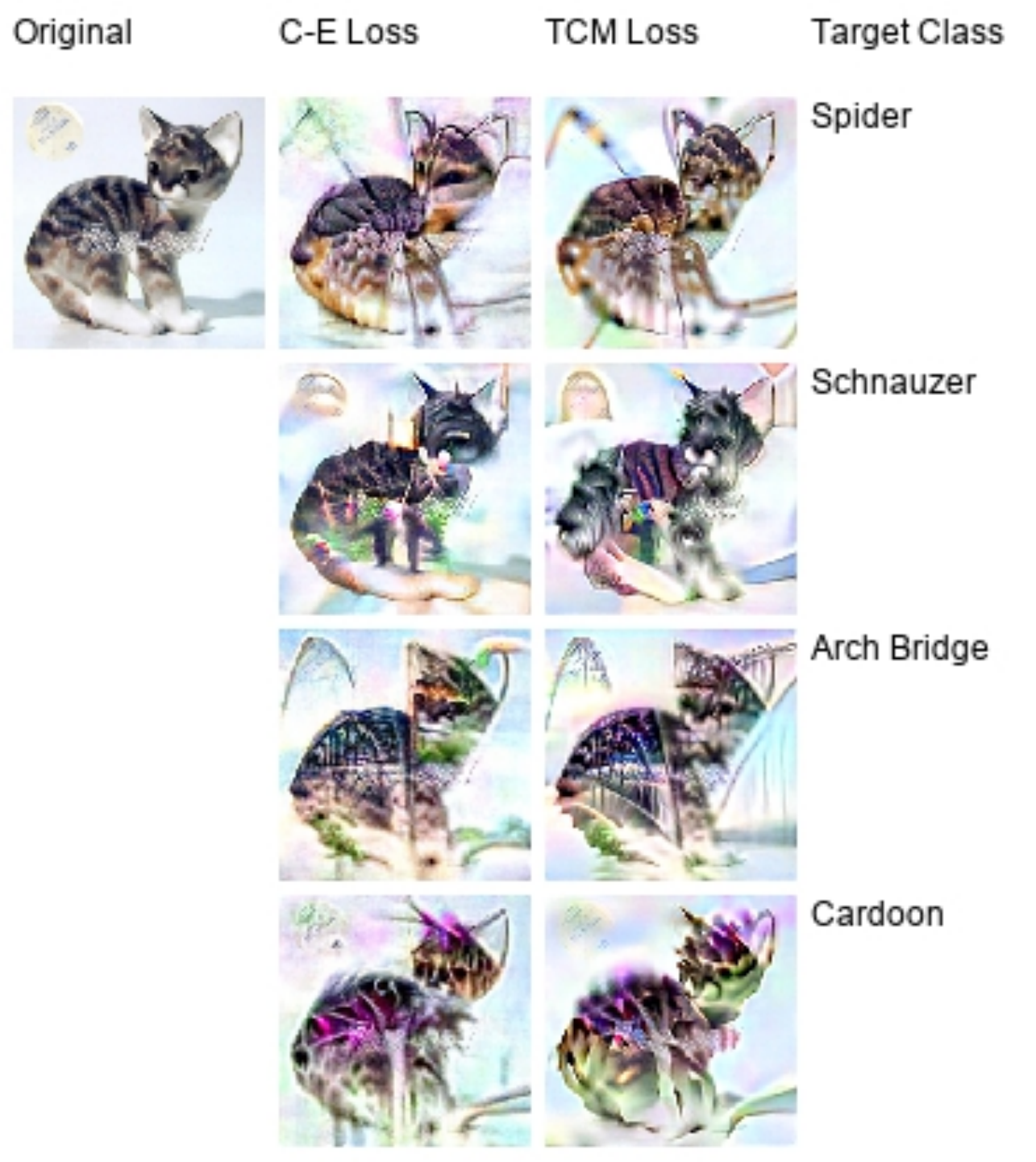} \quad
  \includegraphics[scale=0.5]{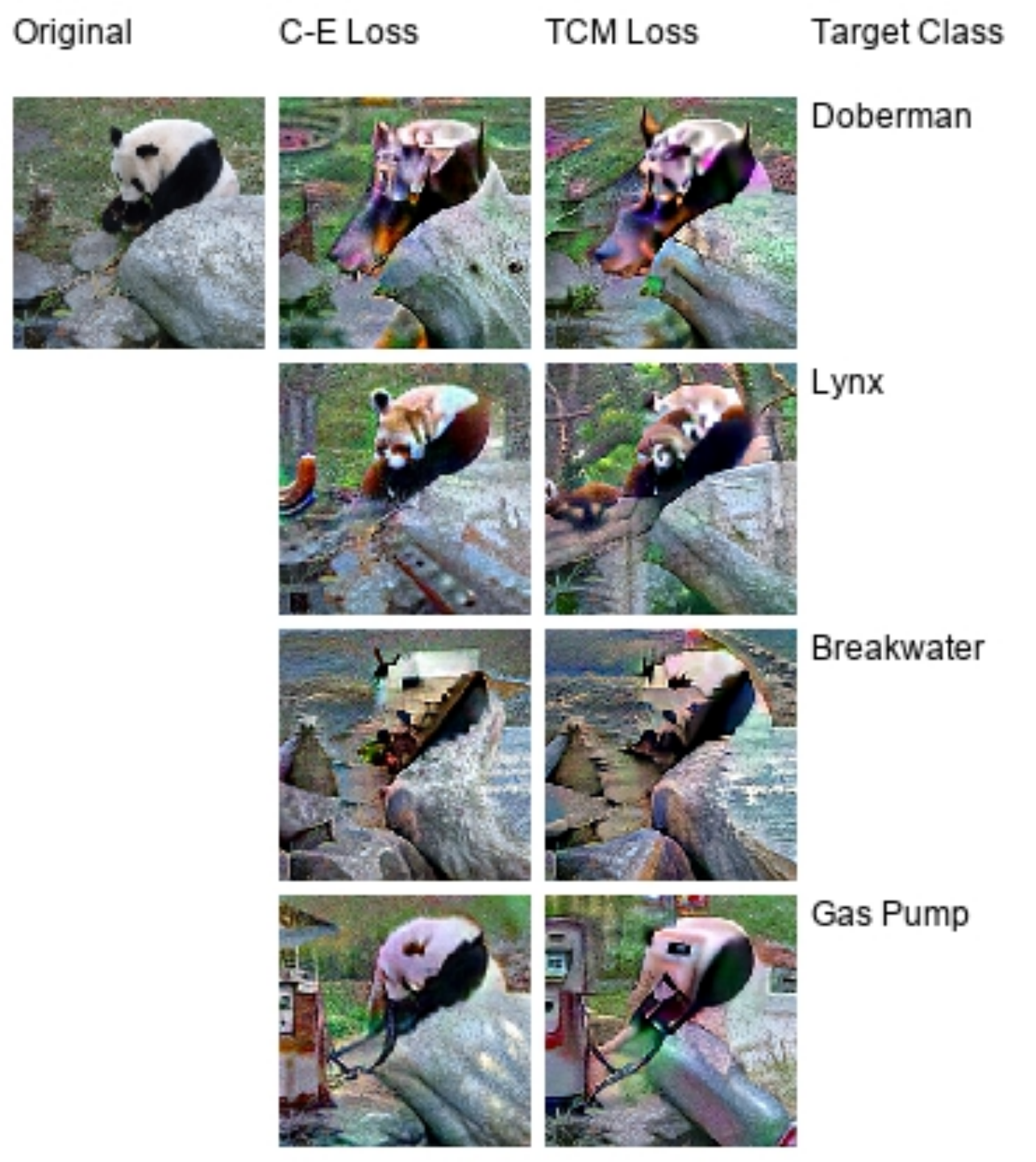} \quad
  \includegraphics[scale=0.5]{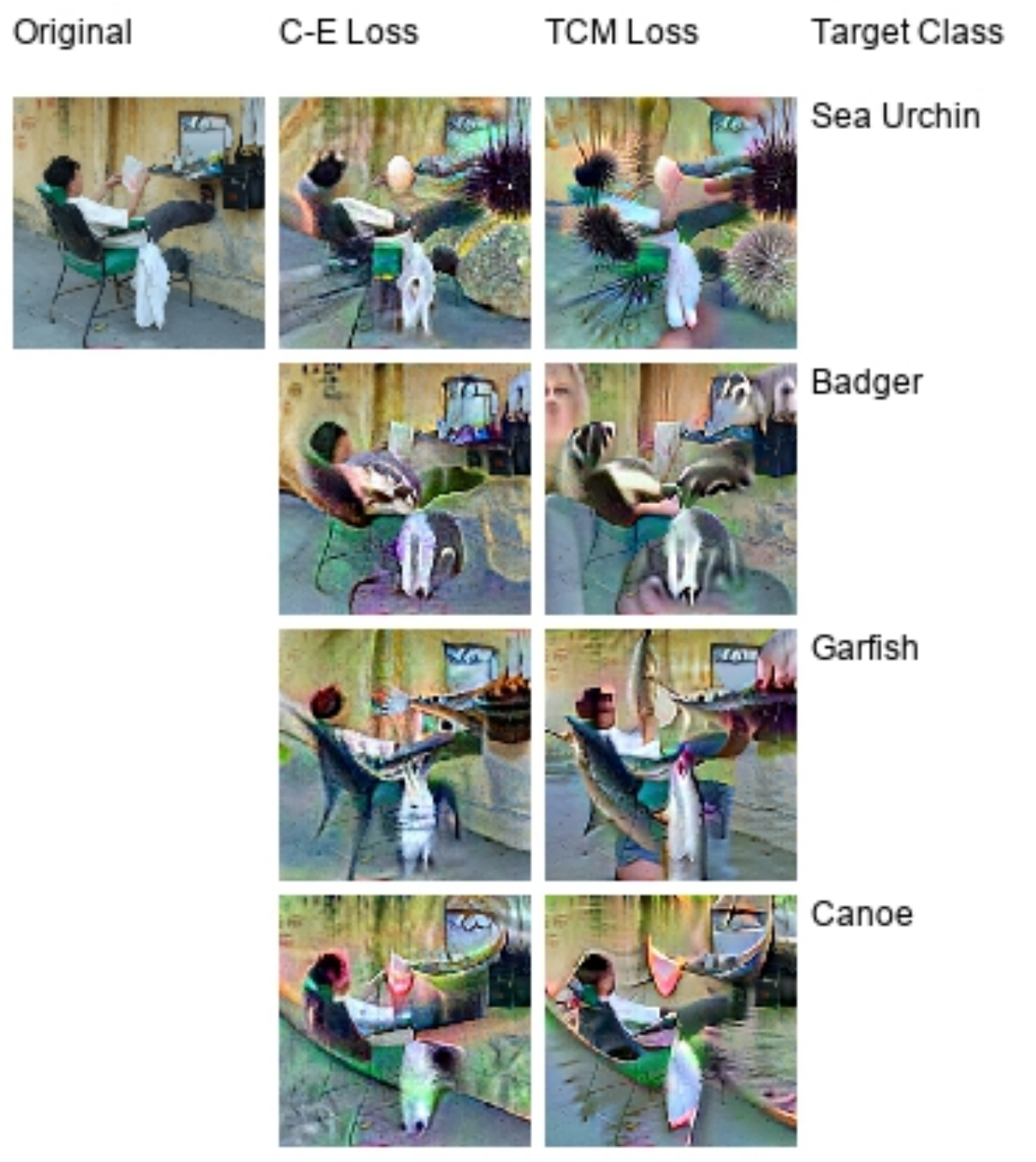} \quad
  \caption{Here, we compare the perceptual quality of large-$\epsilon$ adversarial examples (for a smoothed neural network) crafted using the cross-entropy loss $L_{\text{CE}}$ to those crafted using the target class max $L_{\text{TCM}}$ loss.  Observe that adversarial examples crafted using the TCM loss seem to better resemble the targeted class.  For this reason, we used the TCM loss function throughout this paper. }
  \label{fig:three-objectives}
\end{figure}

\newpage

\begin{figure}[h!]
\centering
  \includegraphics[scale=0.45]{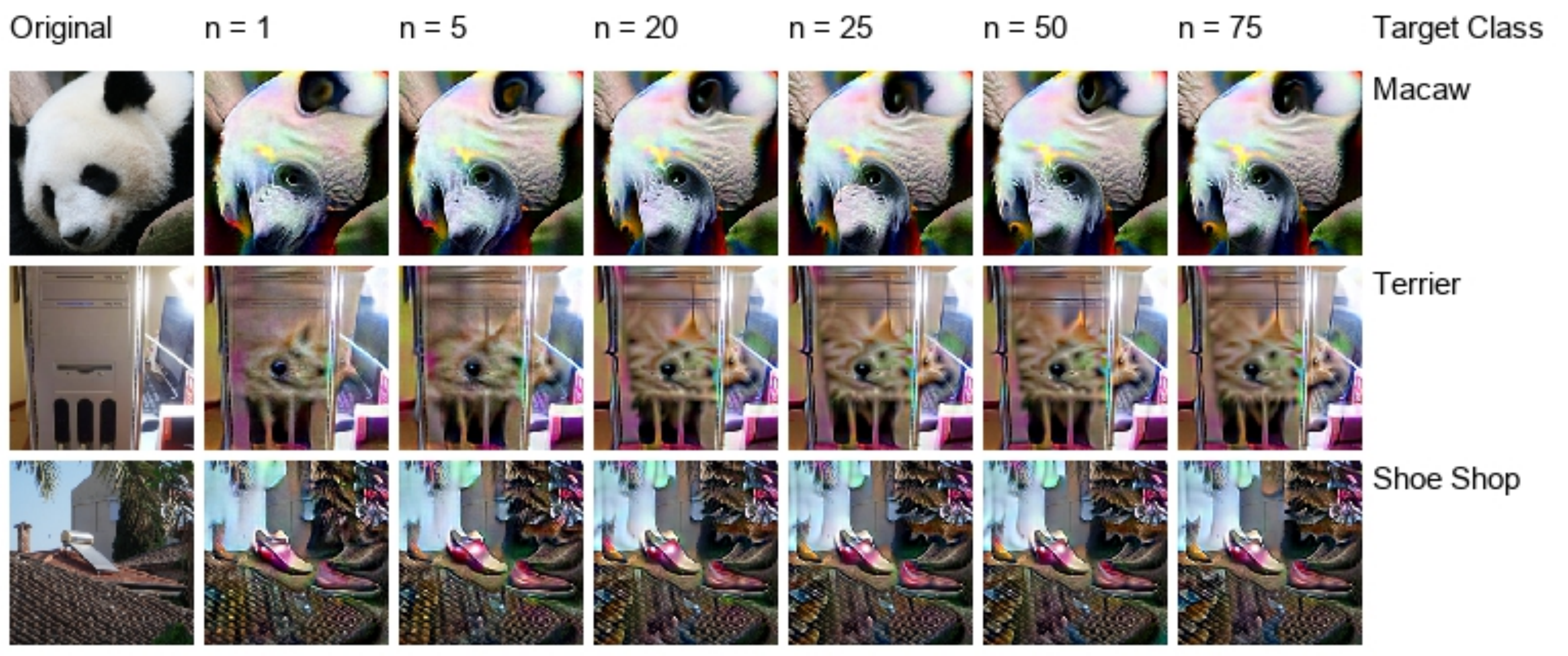}
  \caption{Large-$\epsilon$ adversarial examples for a smoothed neural network crafted using different settings of the parameter $N$, the number of Monte Carlo samples used for gradient estimation.}
  \label{fig:noise-sample-variation}
\end{figure}

\newpage

\begin{figure}[!h]
  \centering
  \includegraphics[scale=0.5]{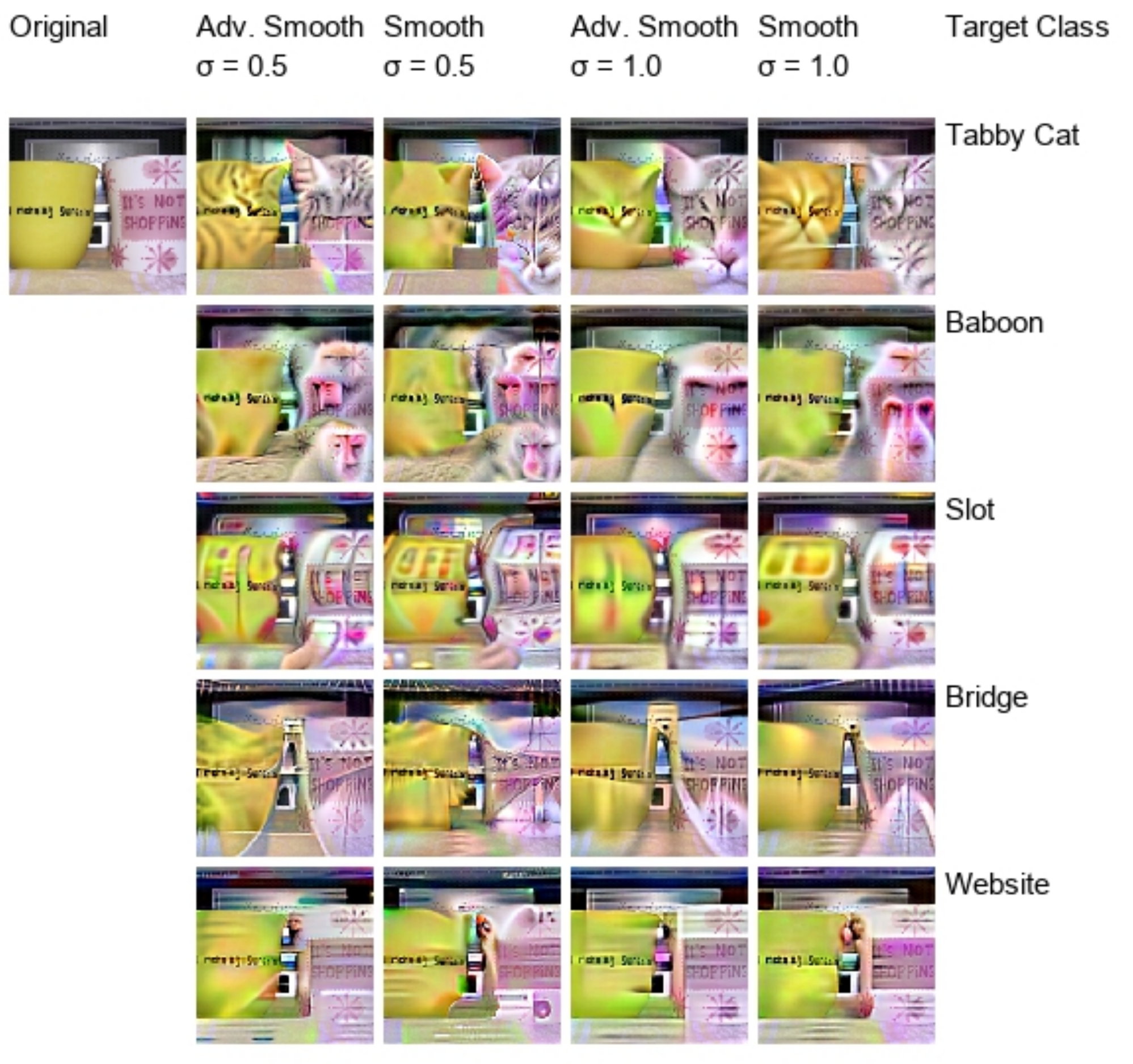}
  \caption{We compare (part 1/3) large-$\epsilon$ targeted adversarial examples for smoothed networks trained using Gaussian data augmentation \cite{lecuyer2018certified, cohen2019certified} (columns ``Smooth'') to those for smoothed networks trained using the \textsc{SmoothAdv} algorithm of \protect\cite{salman2019provably}, i.e. adversarial training on the smoothed classifier  (columns ``Adv. Smooth'').}
  \label{fig:adv-smooth-pt1}
\end{figure}

\newpage

\begin{figure}[!h]
  \centering
  \includegraphics[scale=0.5]{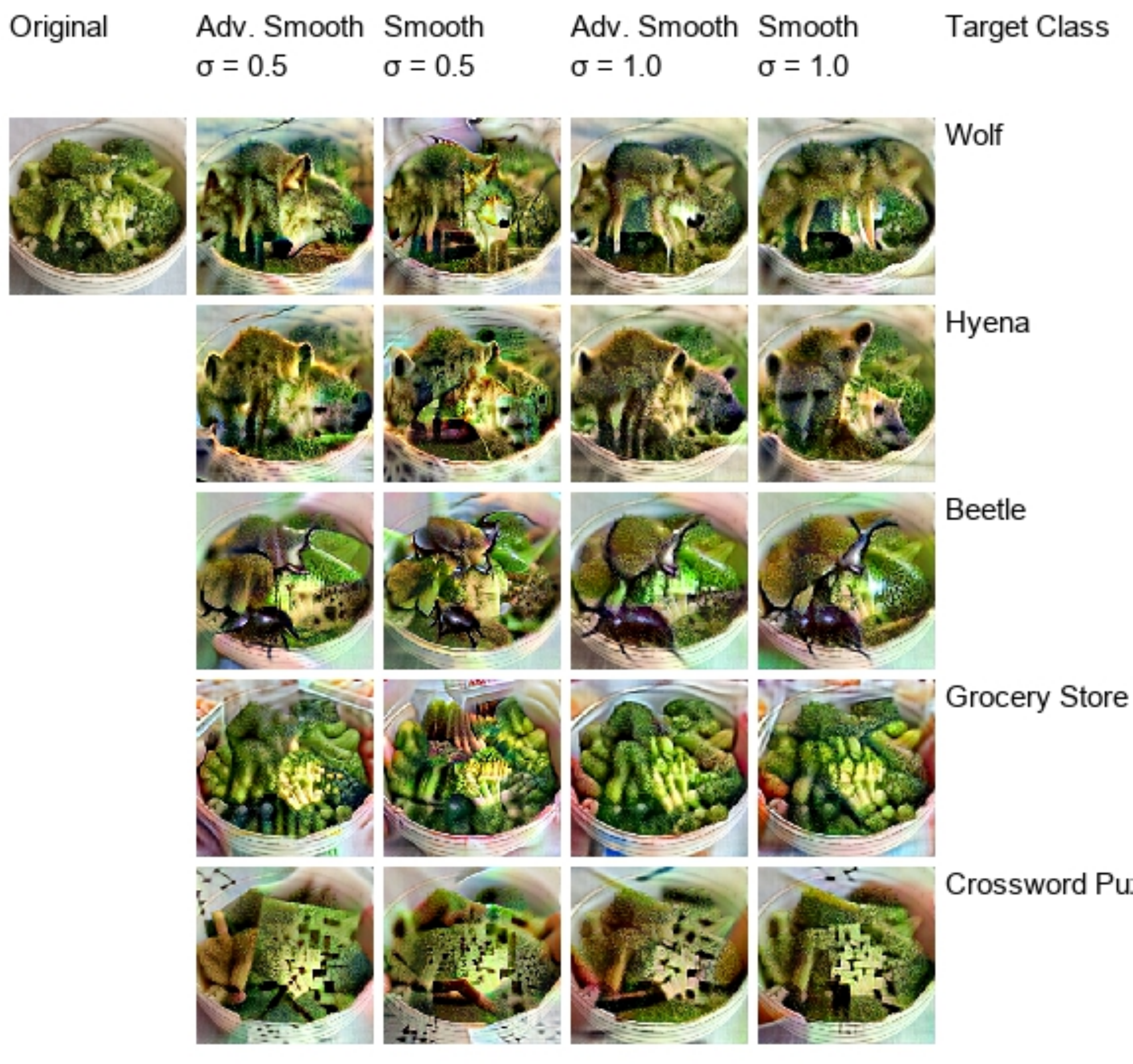}
  \caption{We compare (part 2/3) large-$\epsilon$ targeted adversarial examples for smoothed networks trained using Gaussian data augmentation \cite{lecuyer2018certified, cohen2019certified} (columns ``Smooth'') to those for smoothed networks trained using the \textsc{SmoothAdv} algorithm of \protect\cite{salman2019provably}, i.e. adversarial training on the smoothed classifier  (columns ``Adv. Smooth'').}
  \label{fig:adv-smooth-pt2}
\end{figure}

\newpage

\begin{figure}[!h]
  \centering
  \includegraphics[scale=0.5]{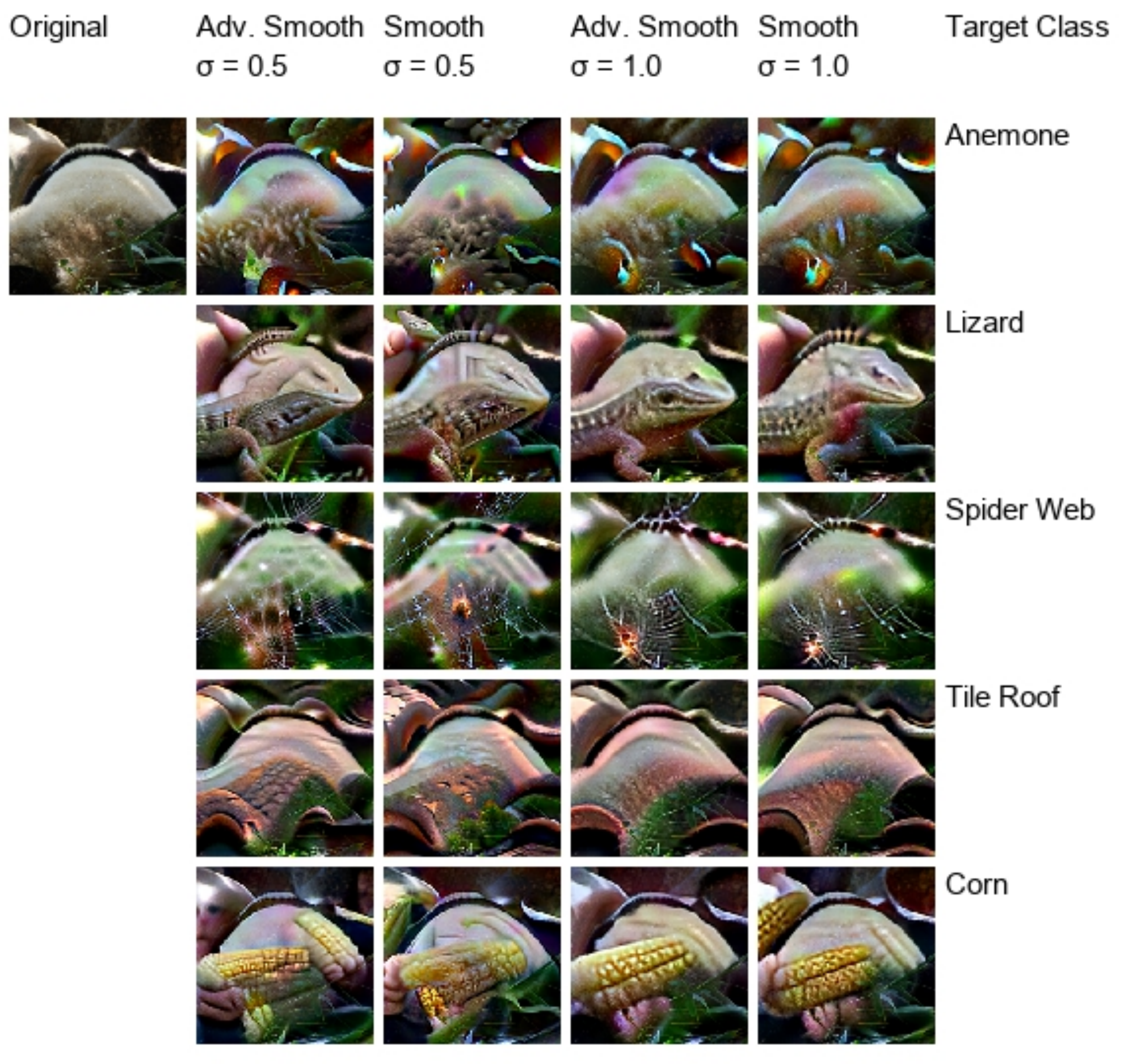}
  \caption{We compare (part 3/3) large-$\epsilon$ targeted adversarial examples for smoothed networks trained using Gaussian data augmentation \cite{lecuyer2018certified, cohen2019certified} (columns ``Smooth'') to those for smoothed networks trained using the \textsc{SmoothAdv} algorithm of \protect\cite{salman2019provably}, i.e. adversarial training on the smoothed classifier  (columns ``Adv. Smooth'').}
  \label{fig:adv-smooth-pt3}
\end{figure}

\end{document}